\def \P{\mathbb P}
\def \R{\mathbb R}
\def \bx{\boldsymbol{x}}
\def \bq{\boldsymbol{q}}
\def \bQ{\boldsymbol{Q}}
\def \b1{\boldsymbol{1}}
\def \st2{\texttt{\texttt{$($ST$)^2$}}}
\def \T{{\mathcal T}}
\def \X{{\mathcal X}}
\def \M{{\mathcal M}}
\def \S{{\mathcal S}}
\def \hd{\hat{d}}
\def \st2{\texttt{\texttt{$($ST$)^2$}}}
\definecolor{burgundy}{rgb}{0.5, 0.0, 0.13}
\newcommand{\query}[2]{\mathsf{Q}(#1, #2)}
\def \1{\mathbbm{1}}
\def \0{\textbf{0}}
\definecolor{darkorchid}{rgb}{0.6, 0.2, 0.8}
\newtheorem{theorem}{Theorem}
\newtheorem{corollary}[theorem]{Corollary}
\newtheorem{definition}[theorem]{Definition}
\newtheorem{lemma}[theorem]{Lemma}
\newtheorem{remark}[theorem]{Remark}
\title{Nearest Neighbor Search Under Uncertainty}
\author[1]{Blake Mason}
\author[2]{Ardhendu Tripathy}
\author[1]{Robert Nowak}
\affil[1]{%
    Department of Electrical and Computer Engineering.\\
    University of Wisconsin\\
    Madison, Wisconsin, USA
}
\affil[2]{%
    Computer Science Department.\\
    Missouri University of Science and Technology\\
    Rolla, Missouri, USA
}
\begin{document}
\maketitle

\begin{abstract}
Nearest Neighbor Search (NNS) is a central task in knowledge representation, learning, and reasoning.  There is vast literature on efficient algorithms for constructing data structures and performing exact and approximate NNS.  This paper studies NNS under Uncertainty (NNSU).  Specifically, consider the setting in which an NNS algorithm has access only to a stochastic distance oracle that provides a noisy, unbiased estimate of the distance between any pair of points, rather than the exact distance.  This models many situations of practical importance, including NNS based on human similarity judgements, physical measurements, or fast, randomized approximations to exact distances.  A naive approach to NNSU could employ any standard NNS algorithm and repeatedly query and average results from the stochastic oracle (to reduce noise) whenever it needs a pairwise distance.  The problem is that a sufficient number of repeated queries is unknown in advance; e.g., a point may be distant from all but one other point (crude distance estimates suffice) or it may be close to a large number of other points (accurate estimates are necessary).  This paper shows how ideas from cover trees and multi-armed bandits can be leveraged to develop an NNSU algorithm that has optimal dependence on the dataset size and the (unknown) geometry of the dataset.
\end{abstract}

\section{Introduction}\label{sec:intro}

This paper considers Nearest Neighbor Search under Uncertainty (NNSU).  
To motivate the NNSU problem, consider the following application.  Suppose we have a database of $N$ genomic sequences of length $L$ of different species and wish to query the database to find the closest relative of a newly discovered species.  Computing the exact distance in $\R^L$ between two sequences requires $O(L)$ operations. Using efficient NNS algorithms, one can construct a data structure in $O(NL\log N)$ time and perform a NN search in $O(L\log N)$. To reduce this complexity, one can randomly subsample the sequences at $\ell\ll L$ locations and compute an unbiased estimate of distance in $O(\ell)$ time and find nearest neighbors in $O(\ell\log(n))$ operations. If an algorithm can manage the added uncertainty of this procedure, this can improve computational complexity greatly. 

In other settings, uncertainty is naturally present in the measurements due to the presence of noise. For instance, researchers gathering data to map the topology of the internet, rely on noisy one-way-delay measurements to infer distance between servers \cite{eriksson2010learning} and, detecting the closest server to a given host can be challenging due to this noise. In preference learning, researchers gather pairwise comparisons from people 
and attempt to learn which items are most preferred \cite{jamieson2011active, yue2009interactively}. 
To simplify such problems, human judgments are frequently modelled as distance comparisons \cite{shepard1962analysis, kruskal1964nonmetric, mason2017learning} with the smallest distance representing the most preferred item,
but the noise inherent to human judgements makes these problems challenging. In general, we define the NNSU problem as follows:

\noindent\fbox{
\parbox{0.45\textwidth}{\textbf{NNSU - Problem Statement:}
Consider a set of $n$ points $\X = \{\bx_1, \cdots, \bx_n\}$ in a metric space $(\M, d)$. The metric is unknown, but for any pair of points we can query a stochastic oracle for a noisy, unbiased estimate of their distance. The problem is to use the oracle to efficiently build a data structure such that for any new query point $\bq$, it returns its nearest neighbor $\bx_q := \min_{\bx_i \in \X} d(\bq, \bx_i)$ with probability at least $1-\delta$ in as few additional oracle queries as possible. 
}}


The NNSU problem is very different from the standard Nearest Neighbor Search (NNS) problem. To develop noise-tolerant methods, it is tempting to simply extend an NNS algorithm by repeatedly calling the stochastic oracle a \emph{pre-specified} $r$ times each time a distance measurement is needed and averaging the results. 
The simplicity of this idea masks the difficulty of doing it correctly. 
If $r$ is too small, an algorithm will make errors, but if $r$ is too large, the number of distance queries and hence the computational cost will be unnecessarily high. To control the probability of error\footnote{We assume the oracle responses are independent and $1$-subGaussian distributed throughout the paper.}, $r$ can be no smaller than $(d(\bq, \bx_q) - d(\bq, \bx_{q'}))^{-2}$ where $\bx_{q'}$ is the second nearest neighbor to $\bq$. Since this quantity depends on knowledge of the nearest neighbor distances, it is impossible to specify $r$ in practice. Additionally, even if $(d(\bq, \bx_q) - d(\bq, \bx_{q'}))^{-2}$ were known, setting $r$ in proportion to it is only necessary to handle the worst case; far fewer calls to the stochastic oracle are sufficient to eliminate points that are far from $\bq$. Our proposed algorithm is \emph{adaptive} to the level of noise and the geometry of $\X$. It is guaranteed to minimize the number of calls to the distance oracle for a specified probability of error. 

\subsection{Related Work}\label{sec:nns_related}

One of the first algorithms for the NNS problem is the classical \texttt{KD-Tree} algorithm by \cite{bentley1975multidimensional}. \texttt{KD-Tree}s first build a tree-based data structure in $O(n\log(n))$ computations. By paying this cost up front, when presented with a query point, the tree can be used to return the nearest neighbor in only $O(\log(n))$ computations. The core drawback of the \texttt{KD-Tree} is that it lacks a uniform accuracy guarantee and may fail to return the correct nearest neighbor for certain query points \cite{dasgupta2013randomized}. 
A variety of methods attempt to achieve similar computational complexity for the NNS problem while also providing robust accuracy guarantees \cite{dasgupta2013randomized, krauthgamer2004navigating,beygelzimer2006cover}, and we refer the reader to \cite{bhatia2010survey} for a survey of techniques and theoretical results. Another line of work attempts to instead return an approximate nearest neighbor that is almost as close as the true nearest neighbor. We refer the reader to \cite{wang2014randomized, andoni2018approximate}  for an overview. 

A related problem to NNSU is the noisy nearest neighbor graph problem. In this setting, one wishes to learn the graph that connects each node in $\X$ to its nearest neighbor.
\cite{mason2019learning} provide an adaptive method that utilizes multi-armed bandits to find the nearest neighbor graph from a set of $n$ points in $O(n\log(n)\Delta^2)$ samples in favorable settings and $O(n^2\Delta^2)$ at worst where $\Delta^2$ is a problem dependent parameter quantifying the effect of the noise. Their method is adaptive to noise, but requires additional assumptions to achieve the optimal rate of $O(n\log(n))$ samples for that problem. 
\cite{bagaria2017medoids} study nearest neighbor search in high-dimensional data. In their setting, the distance function is a known, componentwise function. 
Their algorithm subsamples to approximate distances. This improves dependence on dimension, though the dependence on $n$ is linear. 

\subsection{Main Contributions}
In this paper, we leverage recent developments in multi-armed bandits to solve the nearest neighbor search problem using noisy measurements.  
We design a novel extension of the \texttt{Cover-Tree} algorithm from \cite{beygelzimer2006cover} for the NNSU problem. A main innovation is reducing the problem of learning a cover of a set 
from noisy data to all-$\epsilon$-good identification for multi-armed bandits studied in \cite{mason2020finding}. 
Additionally, we make use of efficient methods for adaptive hypothesis testing to minimize calls to the stochastic oracle \cite{jamieson2018bandit}. 
We refer to the resulting algorithm as the \texttt{Bandit-Cover-Tree}. We show that it requires $O(n\log^2(n)\kappa)$ calls to the distance oracle for construction and $O(\log(n)\kappa)$ calls for querying the nearest neighbor of a new point, where $\kappa$ captures the effect of noise and the geometry of $\X$. This nearly matches the state of the art for the NNS problem despite the added challenge of uncertainty in the NNSU problem. 
Furthermore, we show how to extend \texttt{Bandit-Cover-Tree} for approximate nearest neighbor search instead. Finally, we demonstrate that \texttt{Bandit-Cover-Tree} can be used to learn a nearest neighbor graph in $O(n\log^2(n)\kappa)$ distance measurements, nearly matching the optimal rate given in \cite{mason2019learning} but without requiring the additional assumptions from that work.

\subsection{Notation}\label{sec:setup}
Let $(\M, d)$ be a metric space with distance function $d$ satisfying the standard axioms. Given $\bx_i,\bx_j\in \X$, let $d(\bx_i, \bx_j) = d_{i,j}$. For a query point $\bq$ define $\bx_q := \arg\min_{\bx \in \X }d(x_i, \bq)$. For a query point $\bq$ and $\bx_i \in \X$, let $d_{q,i}$ denote $d(\bq, \bx_i)$. Additionally, for a set $\S$, define the distance from any point $\bx \in (\M, d)$ to $\S$ as $d(\bx, \S) := \inf_{z\in S}d_{\bx,z}$, the smallest distance a point in $\S$.  
Though the distances are unknown, we are able to draw independent samples of its true value according to a stochastic distance oracle, i.e.\ querying
\begin{equation}\label{eq:nns_oracle}
\query{i}{j}\;\;\; \text{ yields a realization of } \;\;\; d_{i, j} + \eta,
\end{equation}
where $\eta$ 
is a zero-mean subGaussian random variable assumed to have scale parameter $\sigma = 1$. 
We let $\hd_{i,j}(s)$ denote the empirical mean of the  $s$ queries of the distance oracle, $\query{i}{j}$. The number of $\query{i}{j}$ queries made until time $t$ is denoted as $T_{i,j}(t)$. 
All guarantees will be shown to hold with probability $1-\delta$ where we refer to $\delta > 0$ as the failure probability. 

\section{Cover Trees for Nearest Neighbor Search}

Before presenting our method and associated results, we the review \texttt{Cover Tree} algorithm from \cite{beygelzimer2006cover} for the NNS problem. As the name suggests, a cover tree is a tree-based data structure where each level of the tree forms a \emph{cover} of $\X$. 
\begin{definition}
Given a set $\X \subset (\M, d)$, a set $C \subset \X$ is a \emph{cover} of $\X$ with resolution $\epsilon$ if for all $\bx \in \X$, there exists a $c \in C$ such that $d(\bx, c)\leq \epsilon$. 
\end{definition}
Each level is indexed by an integer $i$ which decreases as one descends the tree. 
To avoid additional notation, we will represent the top of the tree as level $\infty$ and the bottom as $-\infty$ though in practice one would record integers $i_\text{top}$ and $i_{\text{bottom}}$ denoting the top and bottom level of the tree and need only explicitly store the tree between these levels. Each node in the tree corresponds to a point in $\X$, but points in $\X$ may correspond to multiple nodes in the tree. Reviewing \cite{beygelzimer2006cover}, let $C_i$ denote the set of nodes at level $i$. The cover tree algorithm is designed so that each level of the tree $i$ obeys three invariants:

\begin{enumerate}
\item \textbf{nesting:} $C_i \subset C_{i-1}$. Hence, the points corresponding to nodes at level $i$ are also correspond to nodes in all lower levels.
\item \textbf{covering tree:} For every $p \in C_{i-1}$, there exists a $q\in C_i$ such that $d(p, q) \leq 2^i$, and child node $p$ is connected to parent node $q$ in the cover tree. 
\item \textbf{Separation:} For any $p, q \in C_i$ with $p\neq q$, $d(p,q) > 2^i$. 
\end{enumerate}

These invariants are originally derived from \cite{krauthgamer2004navigating}. \cite{beygelzimer2006cover} show that their routines obey these invariants, and we will make use of them in our proofs and to build intuition. 
The core idea of this method is that the $i^\text{th}$ level of the tree is a cover of resolution $2^i$. Given a query point $q$, when navigating the tree at level $i$, one identifies all possible ancestor nodes of $\bx_q$ at that level. When descending the tree, this set is refined until only a single parent is possible, $\bx_q$ itself. 
The \emph{nesting} invariance allows one to easily traverse the tree from parent to child. The \emph{covering tree} invariance connects $\bx_q$ to its parents and ancestors so that one may traverse the tree with query point $\bq$ and end at $\bx_q$. Lastly, the \emph{separation} invariance ensures that there is a single parent to each node which avoids redundancy and improves memory complexity. 

In order to quantify the sample complexity of cover trees, we require a notion of the effective dimensionality of the points $\X$. 
In this paper, we will make use of the \emph{expansion constant} as in \citep{haghiri2017comparison, krauthgamer2004navigating, beygelzimer2006cover}. Let $B(\bx, r)$ denote the ball of radius $r> 0$ centered at $\bx\in (\M, d)$ according to the distance measure $d$.

\begin{definition}\label{def:expansion_constant}
The expansion constant of a set of points $\S$ is the smallest $c \geq 2$ such that $|\S \cap B(\bx, 2r)| \leq c |\S \cap B(\bx, r)| $ for any $\bx \in \S$ and any $r > 0$. 
\end{definition}

The expansion constant is sensitive to the geometry in $\X$. For example, for points in a low-dimensional subspace, $c = O(1)$ independent of the ambient space. By contrast, if points that are spread out on the surface of a sphere in $\R^d$, $c = O(2^d)$. In general $c$ is smaller if the pairwise distances between points in $\X$ are more varied.  
\section{The Bandit Cover Tree Algorithm for NNSU}

The \texttt{Bandit Cover Tree} algorithm is comprised of three methods. Given a cover tree and query point $\bq$, \texttt{Noisy-Find-Nearest} finds $\bq$'s nearest neighbor. \texttt{Noisy-Insert} allows one to insert points into a tree and can be used to construct a new tree. \texttt{Noisy-Remove} can remove points from the tree and is deferred to the appendix. 

\subsection{Finding Nearest Neighbors with a Cover Tree}

We begin by discussing how to identify the nearest neighbor of a query point $\bq$ in the set $\X$ given a cover tree $\T$ on $\X$. Throughout, we take $\T$ to denote the cover tree. We may identify $\T$ by the covers at each level: $\T := \{C_{\infty}, \cdots, C_i, C_{i-1}, \cdots, C_{-\infty}\}$. Assume that we are given a fixed query point $\bq$, and the expansion constant of the set $\X \cup \{\bq\}$ is bounded by $c$. Our method to find $\bx_q$ is \texttt{Noisy-Find-Nearest} and is given in Algorithm~\ref{alg:nns_search}. It is inspired by \cite{beygelzimer2006cover}, but a crucial difference is how the algorithm handles noise. At a high level, as it proceeds down each level $i$, it keeps track of a set $Q_i \subset C_i$ of all possible ancestors of $\bx_q$. Given $Q_i$, to proceed to level $i-1$, the algorithm first computes the set of children of nodes in $Q_i$ given by the set $Q \subset C_{i-1}$. $Q_{i-1}\subset Q$ is then computed by forming a \emph{cover} of $Q$ at resolution $2^{i-1}$. Ideally, this is the set 
\begin{align}\label{eq:set_to_find}
    \Tilde{Q}_{i-1} := \left\{j\in Q: d(q, j) \leq \min_{k\in Q}d(q,k) + 2^{i-1}\right\}
\end{align}
However, constructing this set requires calling the stochastic distance oracle and presents 
a trade-off:
\begin{enumerate}
    \item By using many repeated queries to the stochastic oracle, 
    we can more confidently declare if $j \in \Tilde{Q}_{i-1}$ at the expense of higher sample complexity. 
    \item By using fewer calls to the oracle, we can more quickly declare if $j \in \Tilde{Q}_{i-1}$, at the expense of lower accuracy. 
\end{enumerate}

To handle the noise, we make use of anytime confidence widths, $C_{\delta}(t)$ such that for an empirical estimate of the distance $d_{q,j}$: $\hd_{q,j}(t)$ of $t$ samples, we have $\P(\bigcup_{t=1}^\infty |\hd_{q,j}(t) - d_{q,j}| > C_{\delta}(t)) \leq \delta$. For this work, we take $C_\delta(t) = \sqrt{\frac{4\log(\log_2(2t)/\delta)}{t}}$ akin to \cite{howard2018uniform}. This allows us to quantify how certain or uncertain an estimate of any distance is. Furthermore, to guard against settings where the number of samples needed to detect the set $\Tilde{Q}_{i-1}$ in Eq~\eqref{eq:set_to_find} is large, we introduce some slack and instead detect a set $Q_i$ such that it contains every $j\in Q$  such that $d_{q,j} \leq d(\bq, Q) + 2^{i-1}$ and no $j$ worse than $d_{q,j} \leq d(\bq, Q) + 2^{i-1} + 2^{i-2}$. 
In particular, this contains all points in $\Tilde{Q}_{i-1}$ and none that are much worse. This has the advantage of controlling the number of calls to the stochastic oracle. As the algorithm descends the tree, $i$ decreases and the slack of $2^{i-2}$ goes to zero ensuring that the algorithm returns the correct nearest neighbor. To handle both of these challenges, we reduce the problem of learning $\Tilde{Q}_{i-1}$ to the problem of finding all $\epsilon$-good arms in mutli-armed bandits studied in \cite{mason2020finding} and refer to this method as \texttt{Identify-Cover} given in Alg.  \ref{alg:nns_build_cover}.


This process of exploring children nodes and computing cover sets terminates when the algorithm reaches level $i = -\infty$ and $Q_{-\infty}$ contains $\bx_q$. Note that in practice the algorithm could terminate when it reaches the bottom of the cover tree, $i = i_\text{bottom}$. At this level, it calls a simple elimination scheme \texttt{find-smallest-in-set} given in the appendix that queries the stochastic oracle and returns $\bx_q \in Q_{-\infty}$. 
To control the overall probability of error in \texttt{Noisy-Find-Nearest}, each call to \texttt{Identify-Cover} and \texttt{find-smallest-in-set} is run with failure probability $\delta/\alpha$ for $\alpha$ chosen to control the family-wise error rate.


\begin{algorithm}[tb]
   \caption{\texttt{Noisy-Find-Nearest} \label{alg:nns_search} }
\begin{algorithmic}[1]
\Require{Cover tree $\T$, failure probability $\delta$, expansion constant $c$ if known, query point $\bq$, callable distance oracle $\query{\cdot}{\cdot}$, subroutine \texttt{Identify-Cover}.}
\State{Let $Q_\infty = C_\infty$, $\alpha = n+1$}
\For{$i=\infty $ down to $i = -\infty$}
	\State{Let $Q =\bigcup_{p \in Q_{i}} \text{children}(p) $ }
	\If{$c$ is known:}
		\State{Let $\alpha =\min\left\{\left\lceil\frac{\log(n)}{\log(1 + 1/c^2)} + 1\right\rceil, n\right\}+1$}
	\EndIf
	\State{$\backslash\backslash$ Identify the set: $\{j\in Q: d(q, j) \leq \min_{k\in Q}d(q,k) + 2^{i-1}\}$}
	\State{$Q_{i-1} = \texttt{Identify-Cover}\left(Q, \delta/\alpha, \query{\cdot}{\cdot}, \bq, i\right)$}
\EndFor \\
\Return{\texttt{find-smallest-in-set}($Q_{-\infty}, \delta/\alpha$)}
\end{algorithmic}
\end{algorithm}

\begin{algorithm}[tb]
   \caption{\texttt{Identify-Cover} \label{alg:nns_build_cover}}
\begin{algorithmic}[1]
\Require{Failure probability $\delta$, query point $\bq$, Oracle $\query{\cdot}{\cdot}$, and set $\bQ$, Cover resolution $2^i$}
\State{Query oracle once for each point in $Q$}
\State{Initialize $T_j \leftarrow 1$, update $\hd_{q,j}$ for each $j \in Q$}
\State{Empirical cover set: $\widehat{G} = \{j: \hd_{q,j} \leq \max_k \hd_{q,k} + 2^i\}$}
\State{$U_t = \min_k \hd_{q,k}(T_k) + C_{\delta/|Q|}(T_k) + 2^i$} 
\State{$L_t = \min_k \hd_{q,k}(T_k) - C_{\delta/|Q|}(T_k) + 2^i + 2^{i-1}$}
\State{Known points: $K = \{j: \hd_{q,j}(T_j) - C_{\delta/|Q|}(T_j) > U_t \text{ or } \hd_{q,j}(T_j) + C_{\delta/|Q|}(T_j) < L_t\}$}
\While{$K \neq Q$}
\State{$j_1(t) = \arg\min_{j \in \widehat{G}\backslash K}  \hd_{q,j}(T_j) + C_{\delta/|Q|}(T_j)$}
\State{$j_2(t) = \arg\max_{j \in \widehat{G}^c \backslash K} \hd_{q,j}(T_j) - C_{\delta/|Q|}(T_j)$}
\State{$j^\ast(t) = \arg\min_{j} \hd_{q,j}(T_j) - C_{\delta/|Q|}(T_j)$}
\State{Call oracle $\query{q}{j_1}$, $\query{q}{j_2}$, and $\query{q}{j^\ast}$}
\State{Update $T_{j_1}, \hd_{q, j_1}$, $T_{j_2}, \hd_{q, j_2}$, $T_{j^\ast}, \hd_{q, j^\ast}$}
\State{Update bounds $L_t, U_t$, sets $\widehat{G}$, $K$}

\EndWhile\\
\Return The set cover set with resolution $2^i$: $\{j: \hd_{q,j}(T_j) + C_{\delta/|Q|}(T_j) < L_t\}$
\end{algorithmic}
\end{algorithm}



\subsubsection{Approximate NNSU}

In some applications, finding an exact nearest neighbor may be unnecessary and an \emph{approximate} nearest neighbor may be sufficient. 
In the noiseless case, this has been shown to improve the complexity of nearest neighbor search \cite{andoni2018approximate}. An $\epsilon$-approximate nearest neighbor is defined as any point in the set 
$$\{i: d(q,i) \leq (1+\epsilon)\min_jd(q, j)\}.$$
In this section, we show to how to extend \texttt{Noisy-Find-Nearest} to return an $\epsilon$-approximate nearest neighbor instead of the exact nearest neighbor and provide psuedocode for this method in the appendix. 
To do so, add an additional line that 
exits the for loop if $d(q, Q_i) \geq 2^{i+1}(1 + 1/\epsilon)$.
Then, return $\bx_{q_\epsilon}  =\arg\min_{j \in Q_i}d(\bq, \bx_j)$. To see why this condition suffices, note that the nesting and covering tree invariances jointly imply that $d(\bx_q, Q_i) \leq 2^{i+1}$. Hence, by the triangle inequality
\begin{align*}
    d(\bq, Q_i) &\leq d(\bq, \bx_q) + d(\bx_q, Q_i)\leq d(\bq, \bx_q) + 2^{i+1}.
\end{align*}
Since we exit when $d(q, Q_i) \geq 2^{i+1}(1 + 1/\epsilon)$, 
\begin{align*}
    2^{i+1}(1 + 1/\epsilon) \leq d(\bq, \bx_q) + 2^{i+1}
     \iff 2^{i+1} \leq \epsilon d(\bq, \bx_q).
\end{align*}
Therefore, 
\begin{align*}
    d(\bq, Q_i) \leq d(\bq, \bx_q) + 2^{i+1} \leq (1+\epsilon)d(\bq, \bx_q).
\end{align*}
Hence, there exists an $\epsilon$-approximate nearest neighbor in $Q_i$. If this condition is never satisfied, which occurs when $\epsilon$ is vanishingly small, the algorithm terminates normally and returns the exact nearest neighbor. 
We give pseudocode for this procedure in the appendix. The algorithm is similar to \texttt{Noisy-Find-Nearest} except that it makes use of a simple thresholding bandit, akin to the one we use in \texttt{Noisy-Insert} to check if $d(q, Q_i) \geq 2^{i+1}(1 + 1/\epsilon)$ by querying the stochastic oracle. 

\subsection{Building and Altering a Cover Tree}

In this section we demonstrate how to insert points into a cover tree by calling the stochastic oracle. Constructing a cover tree can be achieved by simply beginning with an empty tree and inserting points one at a time. Removing points from a cover tree is similar operationally to insertion with slight complications and we defer it to the appendix. 

\subsubsection{Insertion} \label{sec:nns_insert}
Suppose we have access to a cover tree $\T$ on a set $\S$. 
We wish to insert a point $p$ into $\T$ such that we now have a cover tree on the set $\S \cup \{p\}$. 
Intuitively, the insertion algorithm can be thought of as beginning at the highest resolution cover, at level $-\infty$ and climbing back up the tree, inserting $p$ in each cover set $C_i$ for all $i$ until it reaches a level $i_p$ such that $\min_{j\in C_{i_p}}d(p, j) \leq 2^{i_p}$ where a suitable parent node exists. The algorithm then chooses a parent $p'\in C_{i_p}$ for $p$ such that $d(p,p') \leq 2^{i_p}$ and terminates. As trees are traditionally traversed via their roots not their leaves, we state this algorithm recursively beginning at the top.

\begin{algorithm}[tb]
   \caption{\texttt{Noisy-Insert} \label{alg:nns_insert} }
\begin{algorithmic}[1]
\Require{Cover tree $\T$ on $n$ points, cover set $Q_i$, failure probability $\delta$, point $p$ to be inserted, callable distance oracle $\query{\cdot}{\cdot}$, level $i$}
\Statex{\hspace{-1.5em}\textbf{Optional: }Empirical estimates of $\hd_{p,i}$ and $T_i$ for any $i \in C_i$}
\State{Let $Q = \bigcup_{j\in Q_i}\text{children}(j) $}
\State{Query oracle once for each point in $Q \cap\{j: T_j=0\}$}
\State{Set $T_i \leftarrow 1$, update $\hd_{p,i}$ for each $j \in Q \cap\{j: T_j=0\}$}
\State{$\backslash\backslash$ compute the set $\{j \in Q: d(p,j) \leq 2^i \}$}
\State{Known points: $K = \{j: \hd_{p,j}(T_j) + C_{\delta/n}(T_j) \leq 2^{i} \text{ or } \hd_{p,j}(T_j) - C_{\delta/n}(T_j) > 2^i\}$}
\While{$|K| \neq |Q|$}
    \State{Compute $j^\ast(t) = \arg\min_{j \not\in K} \hd_{p,j}(T_j) - C_{\delta/n}(T_j)$}
	\State{Call oracle $\query{p}{j^\ast}$ and update $T_{j^\ast}, \hd_{p, j^\ast}$, $K$}
\EndWhile
\State{$\backslash\backslash$ If $d(p, Q) > 2^i$}
\State{$Q_{i-1} = \{j \in Q: \hd_{p,j}(T_j) + C_{\delta/n}(T_j) \leq 2^{i}\}$}
\If{$Q_{i-1} = \emptyset$ }
	\State{\textbf{Return:}``no parent found''}
\Else
    \State{lower = \texttt{Noisy-Insert}$(p, \T, Q_{i-1}, i-1, \delta, \query{\cdot}{\cdot}, \{j \in Q: \hd_{p,j}(T_j)\}, \{j \in Q: T_j\} )$}
	\If{$Q_i \cap Q_{i-1} \neq \emptyset$ and lower = ``no parent found''}
		\State{Choose any $p' \in Q_i \cap Q_{i-1}$}
		\State{Insert $p$ in $\text{children}(p')$ $\backslash\backslash$ Assign a parent to $p$}
		\State{\textbf{Return:}``parent found''}
	\Else 
		\State{\textbf{Return:}``no parent found''}
	\EndIf
\EndIf
\end{algorithmic}
\end{algorithm}

We provide pseudocode in Algorithm~\ref{alg:nns_insert}. The algorithm draws on ideas from \cite{beygelzimer2006cover} but includes additional logic to handle uncertainty. In particular, lines 2-8 implement a simple thresholding bandit based on the techniques of \cite{jamieson2018bandit} for adaptive hypothesis testing with family-wise probability of error control. This allows us to identify all points within $2^i$ of the nearest in the set $Q$. 
This must be done for every candidate level $i$ that $p$ might be inserted into until it reaches level $i_p$. 

This requires careful handling to ensure the algorithm succeeds with high probability. Each time the thresholding operation is called, there is a chance that the algorithm makes a mistake. \texttt{Noisy-Insert} is recursive and performs a this operation in every recursive call. Hence, if it makes an error on any call, the algorithm may fail. Thus, we must ensure that the probability of \texttt{Noisy-Insert} making an error in any recursive call is at most $\delta$. Since the level $i_p$ where $p$ is added is unknown and depends on the query point $p$, the number of recursive calls before success is unknown to the algorithm. Therefore, it is not a priori obvious how to perform the appropriate Bonferroni correction to ensure the probability of an error in any recursive call is bounded by $\delta$. A seemingly attractive approach is to use a summable sequence of $\delta_i$ depending on the level $i$ such that $\sum_{i}\delta_i = \delta$. For instance, $\delta_i = \delta\cdot2^{-i}$ would be suitable if the root of $\T$ is at level $1$. However, due to repeated calls to lines 2-8, this would lead to an additional multiplicative factor of the height of the cover tree affecting the sample complexity. 

Instead, \texttt{Noisy-Insert} \textit{shares} samples between rounds. By the nesting invariance, we have that $C_i \subset C_{i-1}$. Therefore, when we descend the tree from level $i$ to $i-1$, we already have samples of the distance of some points in $C_{i-1}$ to $p$. We simply reuse these samples and share them from round to round. Furthermore, since $\T$ is assumed to be a cover tree on $n$ points, we trivially union bound each confidence width to hold with probability $1-\delta/n$ such that all bounds for all recursive calls holds with probability at least $1-\delta$.

\section{Theoretical Guarantees of \texttt{Bandit Cover Tree}}
We measure performance along several axes: 1) how much memory is necessary to store the data structure, 2) how many calls to the distance oracle are needed at query time, 3) How many calls to the distance oracle are needed to for construction, and 4) the accuracy of the data structure in returning the correct nearest neighbor and performing other operations. Since we only have access to a stochastic oracle, ensuring the accuracy of the \texttt{Bandit Cover Tree} (\texttt{BCT}) is especially delicate. 


\subsection{Memory}\label{sec:nns_memory}
We begin by showing that a cover tree can efficiently be stored. Naively, a cover tree $\T$ on $\X$ can be stored using $O(n(i_{\text{top}} - i_\text{bottom}))$ memory where $n = |\X|$ and $i_{\text{top}} - i_\text{bottom}$ is the height of the tree. This follows from each level having at most $n$ nodes trivially and there being $i_{\text{top}} - i_\text{bottom}$ levels. For a well balanced tree, we expect that $i_{\text{top}} - i_\text{bottom}  = O(\log(n))$ leading to an overall memory complexity of $O(n\log(n))$. In fact, it is possible to do better. 

\begin{lemma}\label{lem:nns_memory}
A bandit cover tree requires $O(n)$ space to be stored. 
\end{lemma}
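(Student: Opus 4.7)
The plan is to exhibit an \emph{implicit} representation of the cover tree in which every point of $\X$ contributes only a constant amount of storage, regardless of the height of the tree. The key observation is the \emph{nesting} invariant $C_i \subset C_{i-1}$: once a point $p \in \X$ appears in some cover $C_i$, it appears in all lower covers as well. Naively storing each node at each level it occupies gives $O(n \cdot (i_{\text{top}} - i_{\text{bottom}}))$, but the nesting invariant makes most of those copies redundant.

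First, for each point $p \in \X$, define $i_p$ to be the highest level at which $p$ appears in $\T$; by nesting, $p$ belongs to $C_j$ for every $j \le i_p$ and to no $C_j$ with $j > i_p$. I would store $p$ only once, tagged with the integer $i_p$, rather than keeping a separate copy at every level $j \le i_p$. Thus the vertex set of the stored structure has size exactly $|\X| = n$, since the map $p \mapsto (p, i_p)$ is a bijection between $\X$ and the stored nodes.

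Next, I would account for the edges. In the explicit cover tree of \cite{beygelzimer2006cover}, the only nontrivial parent--child relations are those in which a node at level $i-1$ is connected to a \emph{different} node at level $i$ (self-parent edges $p \to p$ arising from nesting carry no information and can be reconstructed on the fly). Collapse all nesting self-edges and keep only the edges between distinct points; the resulting structure is a rooted tree on the $n$ nodes above, so it has exactly $n-1$ parent-pointer edges. Each node therefore needs $O(1)$ storage: the identifier of the point, the integer $i_p$, and a pointer to its parent (or a list of children, whose total length across all nodes is $n-1$). Summing over all nodes gives $O(n)$ total space.

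The only step that requires any care is verifying that navigation routines (\texttt{Noisy-Find-Nearest}, \texttt{Noisy-Insert}, \texttt{Noisy-Remove}) can still access the implicit levels correctly, i.e.\ when the algorithm asks for $\mathrm{children}(p)$ at level $i$, it can recover them as the union of $\{p\}$ (the implicit self-child, present whenever $i \le i_p$) together with the explicitly stored children of $p$ whose stored level is exactly $i-1$. Since this reconstruction is $O(1)$ per queried node and requires no additional storage, the $O(n)$ bound is preserved, completing the proof. I do not anticipate any serious obstacle; the main subtlety is simply being precise about which self-loops are suppressed and verifying that no algorithmic guarantee elsewhere in the paper relies on explicitly materialized copies at every level.
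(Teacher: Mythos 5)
Your proof is correct and follows essentially the same approach as the paper's: using the nesting and covering-tree invariants to store each point of $\X$ exactly once, tagged with its top level and a single parent pointer, for $O(1)$ storage per point. The only detail the paper's proof mentions that you omit is that the per-point bandit statistics (the running empirical distance estimate and sample count used to form confidence widths) also cost $O(1)$ per point, but this does not affect the $O(n)$ conclusion.
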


$O(n)$ memory is possible due to the nesting and covering tree invariants. By the nesting invariant, if a point $p$ is present in the $i^\text{th}$ cover $C_i$, then it is present in the cover sets of all lower levels. By the covering tree invariant, each point has a unique parent in the tree. Therefore, to store a cover tree, one need only store 1) which level of the tree each point first appears in 2) each point's parent node in the level above where it appears. After a node first appears in the tree, it may be represented implicitly in all lower levels. In particular, when all methods query the children of any node $p$, we define $p\in \text{children}(p)$. Formally, we define an implicit and explicit node as follows:
\begin{definition}\label{def:implicit_explicity}
A node $p \in C_i$ in level $i$ of cover tree $\T$ is \emph{explicit} if $p \not\in \{C_{\infty}, \cdots, C_{i+1}\}$ (i.e. it is not present in all previous levels.) Otherwise, any $p \in C_i$ is referred to as \emph{implicit}.
\end{definition}

\subsection{Accuracy}
 Next, we show that \texttt{BCT} is accurate with high probability. This requires three guarantees: 
\begin{enumerate}
\item Search accuracy: Given a cover tree $\T$ on set $\X$ and query point $\bq$, \texttt{Noisy-Find-Nearest} correctly identifies $\bq$'s nearest neighbor with high probability.
\item Insertion accuracy: Given a cover tree $\T$ and a point $p$ to be inserted, \texttt{Noisy-Insert} returns a valid cover tree that includes $p$ with high probability. 
\item Removal accuracy: Given a cover tree $\T$ and a point $p$ to be removed, \texttt{Noisy-Remove} returns a valid cover tree that without $p$ (deferred to appendix). 
\end{enumerate}

\subsubsection{Search Accuracy}
We begin by showing that for any $\bq \in (\M, d)$ \texttt{Noisy-Find-Nearest} returns $\bx_q$ with high probability. 
The proof is deferred to the appendix, but the argument is sketched as follows. First, by appealing to results from \cite{mason2020finding}, we can guarantee that \texttt{Identify-Cover} succeeds with probability $1-\delta/\alpha$. Second, we show that for the choice of $\alpha$ in \texttt{Noisy-Find-Nearest}, the probability that any call to \texttt{Identify-Cover} fails is bounded by $1-\delta$. Finally, we show that if we correctly identify the necessary cover sets, which happens with probability at least $1-\delta$, we my appeal to Theorem 2 of \cite{beygelzimer2006cover} to ensure that our method returns $\bx_q$. 


\begin{lemma}\label{lem:nns_search_acc}
	Fix any $\delta \leq 1/2$ and a query point $\bq$. Let $\T$ be a cover tree on a set $\X$. \texttt{Noisy-Find-Nearest} returns $\bx_q \in \X$ with probability at least $1-\delta$. 
\end{lemma}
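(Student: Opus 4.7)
My plan is to decompose the error event for \texttt{Noisy-Find-Nearest} into two pieces: (a) some call to \texttt{Identify-Cover} or the terminal \texttt{find-smallest-in-set} returns an incorrect output, and (b) \texttt{Noisy-Find-Nearest} fails to return $\bx_q$ despite every sub-routine call being correct. A union bound over the sub-routine calls will control (a), while a deterministic invariant argument in the spirit of Theorem~2 of \cite{beygelzimer2006cover} will show that event (b) is empty.

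For (a) I would invoke the correctness guarantee for \alleps-good-arm identification from \cite{mason2020finding}, which is precisely the abstract task \texttt{Identify-Cover} solves: when called with failure probability $\delta/\alpha$, it returns, with probability at least $1-\delta/\alpha$, a set that retains every candidate $j \in Q$ within the ideal cover radius of the minimum and discards every candidate beyond the enlarged radius (the $2^{i-1}$ slack built into Algorithm~\ref{alg:nns_build_cover}). The anytime confidence widths $C_{\delta/|Q|}(t)$ justify the elimination rule. The terminal call to \texttt{find-smallest-in-set} is a standard best-arm-identification routine and is correct with probability at least $1-\delta/\alpha$ by the same confidence-width construction. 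It then remains to bound the number of sub-routine calls by $\alpha$: in the unknown-$c$ case at most $n+1$ calls occur (at most one per explicit level of the tree plus the terminal call), and in the known-$c$ case the cover-tree depth bound from \cite{beygelzimer2006cover} limits the count of non-trivial levels to $O(\log(n)/\log(1+1/c^2))$, matching the algorithm's choice of $\alpha$.

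For (b) I would induct downwards on $i$ to show that, on the good event, the set $Q_i$ always contains the ancestor $a_i$ of $\bx_q$ in $\T$ at level $i$. The \emph{covering tree} invariant of $\T$ yields $d(\bx_q, a_i) \le 2^{i+1}$, so by the triangle inequality $d(\bq, a_i) \le d(\bq, \bx_q) + 2^{i+1} \le d(\bq, Q) + 2^{i+1}$, where $Q$ denotes the candidate set at that level. The retention slack in \texttt{Identify-Cover} is precisely what is needed to guarantee $a_i \in Q_i$ under the enlarged detection radius, closing the induction. At the lowest level $a_{-\infty} = \bx_q$ itself, so $\bx_q \in Q_{-\infty}$ and \texttt{find-smallest-in-set} identifies it.

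The main obstacle I anticipate is the calibration of $\alpha$ in the known-$c$ case: it must simultaneously dominate the number of recursive calls that could fail and be small enough that the per-call budget $\delta/\alpha$ remains useful. This requires arguing that levels at which $Q_i$ is unchanged or is a singleton do not consume failure budget (since \texttt{Identify-Cover} terminates trivially there without any elimination decision) and that it is the explicit depth of the tree, not its nominal level span, that enters the count. Once this is reconciled with the $O(\log(n)/\log(1+1/c^2))$ cover-tree depth bound, the union bound over all non-trivial calls yields total failure probability at most $\delta$, and the inductive invariant closes the proof.
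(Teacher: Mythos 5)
Your decomposition into (a) a union bound over at most $\alpha$ calls to \texttt{Identify-Cover} and \texttt{find-smallest-in-set}, and (b) deterministic correctness of the cover-tree traversal conditional on all sub-routines succeeding, is exactly the structure the paper uses; the only cosmetic difference is that the paper discharges (b) by observing that a successful \texttt{Identify-Cover} always returns a superset of the ideal cover (so no ancestor of $\bx_q$ is ever dropped) and then citing Theorem~2 of \cite{beygelzimer2006cover}, whereas you propose re-deriving that ancestor-preservation invariant inductively from the covering-tree invariant. The calibration of $\alpha$ that you flag as a potential obstacle is resolved in the paper simply by invoking the depth bound (Lemma~\ref{lem:height_bound}) to cap the number of traversal levels by $\alpha - 1$, which is the same resolution you sketch.
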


\subsubsection{Insertion Accuracy}
The proof that \texttt{Noisy-Insert} succeeds with high probability in adding any point $p$ to a cover tree $\T$ follows the argument of Lemma~\ref{lem:nns_search_acc} similarly. In particular, given a set $Q$, it hinges on the idea that we can use a threshold bandit to identify the set $\{j \in Q: d(p, j) \leq 2^i\}$ correctly with high probability by calling the stochastic oracle.
If this procedure succeeds, then \texttt{Noisy-Insert} has correctly identified the subset of $Q$ that are within $2^i$ of the point $p$ to be inserted. If $\{j \in Q: d(p, j) \leq 2^i\} = \emptyset$, we may instead verify that $d(p, Q) > 2^i$ and can compute a new cover set $Q_{i-1}$. The following lemma ensures that \texttt{Noisy-Insert} succeeds with high probability. 

\begin{lemma}\label{lem:nns_insert_acc}
	Fix any $\delta> 0$. Let $\T$ be a cover tree on a set $\X$ and $p$ be a point to insert. \texttt{Noisy-Insert} correctly returns a cover tree on $\X \cup \{p\}$ with probability at least $1-\delta$. 
\end{lemma}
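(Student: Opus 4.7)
The plan is to mimic the structure of the proof of Lemma~\ref{lem:nns_search_acc}: first isolate a single high-probability ``good event'' that makes every empirical decision in the recursion correct, then argue that, under this good event, \texttt{Noisy-Insert} reduces to the noiseless insertion procedure of \cite{beygelzimer2006cover}, whose correctness is established there. The key claim to verify is that one union bound of size $n$ suffices despite the potentially unbounded number of recursive calls.

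The first step is to define the event
\begin{equation*}
\cE = \bigcap_{j\in \X}\bigcap_{t\ge 1}\bigl\{\,|\hd_{p,j}(t)-d_{p,j}|\le C_{\delta/n}(t)\,\bigr\}.
\end{equation*}
By the anytime guarantee on $C_{\delta/n}$ each inner intersection fails with probability at most $\delta/n$, and there are at most $n = |\X|$ points in $\T$, so $\P(\cE)\ge 1-\delta$. Crucially, this bound does \emph{not} grow with the recursion depth: because the algorithm threads the current empirical means and sample counts $\{\hd_{p,j},T_j\}$ into the next recursive call (lines handling the optional ``Empirical estimates'' input, together with the nesting invariant $C_{i-1}\supseteq C_i$), each point is associated with a single running estimator across all levels. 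Thus $\cE$ is a fixed event over the $n$ estimators, not a conjunction indexed by levels, which is the subtle point flagged in the text before the lemma.

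The second step is to show that, on $\cE$, each call to the inner thresholding loop (lines 2--8) correctly computes $G_i := \{j\in Q: d_{p,j}\le 2^i\}$. Under $\cE$, the two admission rules into $K$ are sound: if $\hd_{p,j}(T_j)+C_{\delta/n}(T_j)\le 2^i$ then $d_{p,j}\le 2^i$, and if $\hd_{p,j}(T_j)-C_{\delta/n}(T_j)>2^i$ then $d_{p,j}>2^i$. Conversely, as $T_{j^\ast}$ grows the width $C_{\delta/n}(T_{j^\ast})$ shrinks to zero, so any point with $d_{p,j}\neq 2^i$ is eventually classified (ties at $2^i$ can be handled by the standard tie-breaking convention for cover trees). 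Hence the while loop exits with $K=Q$ after finitely many queries, and the set returned as $Q_{i-1}$ coincides with $G_i$.

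The third step is to lift the per-level correctness to tree correctness. Assuming $G_i$ is correctly computed at every recursive level, \texttt{Noisy-Insert} performs the exact same bookkeeping as the insertion routine of \cite{beygelzimer2006cover}: it descends until $G_i=\emptyset$ (equivalently, $d(p,Q)>2^i$), returns ``no parent found'' up the recursion, and, at the first level $i_p$ on the way back up where $Q_i\cap Q_{i-1}\neq\emptyset$, attaches $p$ as a child of some $p'\in Q_i\cap Q_{i-1}$ with $d(p,p')\le 2^{i_p}$. Invoking the correctness theorem for insertion in \cite{beygelzimer2006cover}, the resulting tree satisfies nesting, covering tree, and separation for $\X\cup\{p\}$. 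Combining with $\P(\cE)\ge 1-\delta$ yields the claim.

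The principal obstacle is the shared-sample argument in the first step: one must verify that the algorithm as written really does carry over $(\hd_{p,j},T_j)$ between recursive calls, so that only $n$ (not $n\cdot\text{depth}$) confidence sequences are in play, and that a point newly encountered at level $i-1$ (i.e.\ $T_j=0$ in line 2) receives its first query there without breaking the union bound. Every other component is either a direct application of \cite{mason2020finding}/\cite{jamieson2018bandit} style thresholding analysis or an invocation of the noiseless argument in \cite{beygelzimer2006cover}.
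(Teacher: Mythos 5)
Your proposal is correct and follows essentially the same route as the paper's proof: define the single good event $\cE$ over all $n$ running estimators $\hd_{p,\cdot}$, argue via the shared-sample/nesting observation that one union bound of size $n$ covers every recursive call, show the thresholding bandit on $\cE$ exactly computes $\{j : d_{p,j}\le 2^i\}$ at each level, and then hand off to the noiseless insertion correctness theorem of \cite{beygelzimer2006cover}. Your extra remarks about termination and tie-breaking are reasonable elaborations but not a departure from the paper's argument.
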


To construct a cover tree from scratch given a set $\X$, one need only call \texttt{Noisy-Insert} $n$ times, once for each point in $\X$ beginning with an empty tree and run each call with failure probability $\delta/n$. The following corollary ensures that this leads to a cover tree satisfying the three invariances with probability $1-\delta$. 

\begin{corollary}[Construction of a Cover Tree]
Fix any $\delta > 0$ and a set $\X \in (\M, d)$. Calling \texttt{Noisy-Insert} with failure probability $\delta/n$ iteratively over the points in $\X$ yields a cover tree that satisfies the nesting, covering tree, and separation invariances with probability $1-\delta$. 
\end{corollary}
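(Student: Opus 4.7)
The plan is to reduce this corollary directly to Lemma~\ref{lem:nns_insert_acc} via induction on the number of points inserted, combined with a union bound. Specifically, I would enumerate the points of $\X$ as $\bx_1, \ldots, \bx_n$ and consider the sequence of trees $\T_0, \T_1, \ldots, \T_n$, where $\T_0$ is the empty tree and $\T_k$ is the output of calling \texttt{Noisy-Insert} on $\bx_k$ with input tree $\T_{k-1}$ and failure probability $\delta/n$. The goal is to show that with probability at least $1-\delta$, each $\T_k$ is a valid cover tree on $\{\bx_1, \ldots, \bx_k\}$, i.e., satisfies the three invariances (nesting, covering tree, and separation).

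\paragraph{Inductive step and union bound.}
The base case is trivial: the empty tree satisfies all three invariances vacuously. For the inductive step, suppose $\T_{k-1}$ is a valid cover tree on $\{\bx_1, \ldots, \bx_{k-1}\}$. Then the hypothesis of Lemma~\ref{lem:nns_insert_acc} is satisfied for the $k$-th call to \texttt{Noisy-Insert}, so conditional on $\T_{k-1}$ being valid, $\T_k$ is a valid cover tree on $\{\bx_1, \ldots, \bx_k\}$ with probability at least $1 - \delta/n$. Let $E_k$ denote the event that the $k$-th call to \texttt{Noisy-Insert} fails (i.e., that $\T_k$ fails to be a valid cover tree on $\{\bx_1, \ldots, \bx_k\}$ even though $\T_{k-1}$ is valid). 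By Lemma~\ref{lem:nns_insert_acc}, $\P(E_k \mid \T_{k-1} \text{ valid}) \leq \delta/n$. A union bound over $k = 1, \ldots, n$ then yields
\begin{equation*}
\P\!\left(\bigcup_{k=1}^n E_k\right) \leq \sum_{k=1}^n \P(E_k) \leq n \cdot \frac{\delta}{n} = \delta.
\end{equation*}
On the complementary event, which occurs with probability at least $1-\delta$, each $\T_k$ is a valid cover tree, and in particular $\T_n$ is a cover tree on $\X$ satisfying the nesting, covering tree, and separation invariances.

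\paragraph{Main obstacle.}
There is no serious obstacle: the argument is a textbook union bound once Lemma~\ref{lem:nns_insert_acc} is in hand. The only mild subtlety is that the failure probability of the $k$-th call is conditional on $\T_{k-1}$ being a valid cover tree (since Lemma~\ref{lem:nns_insert_acc} assumes its input is a cover tree). Writing the union bound via nested conditional probabilities (or equivalently, expanding the failure event into per-round failures conditional on all prior rounds succeeding) handles this cleanly, since $\P(E_k) \leq \P(E_k \mid \T_{k-1} \text{ valid})$ when we define $E_k$ to also require $\T_{k-1}$ to be valid.
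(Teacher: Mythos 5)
Your proof is correct and takes essentially the same approach the paper uses (the argument appears in the proof of Theorem~\ref{thm:nns_full_build_time}): induction over insertions with Lemma~\ref{lem:nns_insert_acc} giving a per-call failure probability of $\delta/n$, followed by a union bound over the $n$ calls. Your careful handling of the conditional nature of the per-round guarantee is a nice touch but matches the paper's intent.
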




\subsection{Query Time Complexity}

In the previous section, we proved that the algorithm succeeds with probability $1-\delta$ for search and insertion, with removal deferred to the appendix. In this section we begin to answer the question of how many calls to the stochastic distance oracle it requires to perform these operations.  


We begin by analyzing the query time complexity of \texttt{Bandit Cover Tree}: the number of calls to the distance oracle made when answering a nearest neighbor query. To do so, we will make use of the \emph{expansion constant}, a data-dependent measure of dimensionality. In particular, for a query point $\bq$. We assume that the set $\X\cup \{\bq\}$ has an expansion constant of $c$ as defined in Definition~\ref{def:expansion_constant}. Note that this quantity is for analysis purposes only and is not required by the algorithm. \texttt{Noisy-Find-Nearest} can take in the expansion constant or a bound on it if it is known, but this is not required by the algorithm. In particular, knowing a bound on $c$ helps control the height of the tree. 
This allows \texttt{Noisy-Find-Nearest} to use a smaller union bound and leads to a tighter dependence on $n$. 

To bound query time, we appeal to the concept of explicit and implicit nodes as discussed in Section \ref{sec:nns_memory} and Definition~\ref{def:implicit_explicity}. Each point in $\X$ may correspond to multiple nodes in the tree. The first time a point appears as a node at the highest level where it is present, we say that it is \emph{explicitly represented} and is \emph{implicity represented} in all levels thereafter by the nesting invariant of cover trees. Recall that the set of nodes at each level $i$ of the cover tree is denoted $C_i$. \texttt{Noisy-Find-Nearest} proceeds by computing a cover set $Q_i \subset C_i$ at each level of the tree. Extending the concept of explicit and implicit representations of nodes, we say that a cover set $Q_i$ is implicitly represented if it only contains nodes that are implicitly represented. This plays an important role in our computation of query time. We may use the size of the last explicitly represented cover to help bound to complexity of the \texttt{Identify-Cover} subroutine in Algorithm~\ref{alg:nns_build_cover}. This routine is called for every level of the tree. Then, by bounding the height of the tree, we can control the total number of calls to the distance oracle. 

\begin{theorem}\label{thm:nns_query_time_bnd}
Fix $\delta < 1/2$, a cover tree $\T$ on set $\X$, and a query point $\bq$. Let $|\X| = n$ and assume that the expansion rate of $\X\cup \{\bq\}$ is $c$ (unknown to the algorithm). If \texttt{Noisy-Find-Nearest} succeeds, which occurs with probability $1-\delta$, then \texttt{Noisy-Find-Nearest} returns $\bq$'s nearest neighbor in at most 
\begin{align*}
	O\left( c^{17}\log(n)\log\left(\frac{n}{\delta}\right) {\kappa} \right)
\end{align*}
calls to the stochastic distance oracle
where parameter ${\kappa}$ is defined in the proof and satisfies
$$\kappa \ \leq\  B\, \max\big\{[d(\bq, \bx_q) - d(\bq, \bx_{q'})]^{-2} \, , \, d_{\min}^{-2}\big\}$$
where $\bx_{q'}$ is $\bq$'s second nearest neighbor, $d_{\min} := \min_{j,k}(\bx_j, \bx_k)$ is the smallest pairwise distance in $\X$, and $B>0$ is a universal constant.
\end{theorem}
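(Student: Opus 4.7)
The plan is to bound the total oracle budget level by level, multiplying the per-level cost of \texttt{Identify-Cover} by the number of levels that \texttt{Noisy-Find-Nearest} has to traverse. At each level $i$ the routine calls \texttt{Identify-Cover} on the set $Q = \bigcup_{p \in Q_i}\text{children}(p)$ with cover resolution $2^{i-1}$ and failure probability $\delta/\alpha$. I would first import the sample-complexity guarantee for the all-$\epsilon$-good identification problem from \cite{mason2020finding}: up to log factors, the number of oracle calls by \texttt{Identify-Cover} is at most a constant times
\[
\sum_{j \in Q}\Delta_{q,j}^{-2}\;\log\!\Bigl(\tfrac{|Q|\log(\Delta_{q,j}^{-2})}{\delta/\alpha}\Bigr),
\]
where $\Delta_{q,j}$ is the gap between $d_{q,j}$ and the thresholds $d(\bq,Q)+2^{i-1}$ and $d(\bq,Q)+2^{i-1}+2^{i-2}$ that define the buffered cover. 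Because the slack $2^{i-2}$ enters the lower threshold, every $j \in Q$ has $\Delta_{q,j} \geq 2^{i-3}$, so at level $i$ the per-arm term is at most $2^{-2(i-3)}$.

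Next I would use the expansion-constant machinery to control the two structural quantities that appear in the bound: the cardinality $|Q|$ of the candidate set at each level, and the height of the relevant part of the tree. Standard cover-tree arguments (Lemmas 4.1--4.3 of \cite{beygelzimer2006cover}) show that for any node $p$ at level $i$ the set of children of $p$ has size $O(c^4)$, and the width $|C_i \cap B(\bq, O(2^i))|$ is $O(c^c\kern-.1em)$ for a small polynomial exponent. Combining these, $|Q| = O(c^{k_1})$ for some universal $k_1$. For the height, I would argue as in \cite{beygelzimer2006cover} that the number of levels at which the cover $Q_i$ can be explicitly represented (i.e.\ contains any node that has not yet collapsed to a single point) is $O(c^{k_2}\log n)$; implicit levels add no oracle calls because \texttt{Identify-Cover} reuses its previous estimates on children that equal their parent. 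Summed over the at most $O(c^{k_2}\log n)$ explicit levels, the per-level cost produces the overall $c^{17}\log n$ factor once all the polynomial-in-$c$ contributions are collected.

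The final piece is translating $\sum_{j\in Q}\Delta_{q,j}^{-2}$ into the stated $\kappa$. The worst gap at level $i$ satisfies $\Delta_{q,j}^{-2} \leq O(2^{-2(i-3)})$, and the bottleneck levels are the lowest ones that the search actually reaches. Two things can terminate the descent: (i) the cover sets shrink to the singleton $\{\bx_q\}$, which requires $2^i$ to drop below $\tfrac{1}{2}(d(\bq,\bx_{q'})-d(\bq,\bx_q))$ so that $\bx_{q'}$ is excluded; and (ii) the tree itself ends, which occurs once $2^i < d_{\min}$ since below that level the separation invariant forces every cover to be a singleton. Taking $i^\star$ to be the largest level at which neither has happened gives $2^{-2i^\star} \leq B\max\{(d(\bq,\bx_q)-d(\bq,\bx_{q'}))^{-2}, d_{\min}^{-2}\}$, which is exactly the upper bound on $\kappa$ claimed in the statement; the geometric sum over levels $i \leq i^\star$ is dominated by $i=i^\star$.

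To finish, I would set $\alpha = n+1$ (or the tighter $O(\log n/\log(1+1/c^2))$ when $c$ is supplied), so that Lemma~\ref{lem:nns_search_acc} gives correctness with probability $1-\delta$ via a union bound over all \texttt{Identify-Cover} invocations and the final \texttt{find-smallest-in-set} call. The $\log(n/\delta)$ factor in the theorem then comes directly from the confidence width $C_{\delta/|Q|}(t)$ after plugging in $|Q|=O(\text{poly}(c))$ and $\alpha = O(n)$. I expect the main obstacle to be the structural step: cleanly identifying which levels contribute and bookkeeping all the $c$-dependent width and height bounds so that the explicit exponent $c^{17}$ falls out, rather than a larger one. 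The bandit sample complexity and the triangle-inequality arguments for the gap-to-$\kappa$ reduction are comparatively routine once the level accounting is fixed.
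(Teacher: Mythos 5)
Your proposal follows essentially the same decomposition as the paper: invoke the all-$\epsilon$-good sample-complexity bound from \cite{mason2020finding} level by level, control $|Q|$ and the tree height with expansion-constant arguments, and translate the remaining gap terms into $\kappa$. That high-level plan is sound. However, there are a few concrete gaps.

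First, the claim that ``implicit levels add no oracle calls because \texttt{Identify-Cover} reuses its previous estimates on children that equal their parent'' is not correct for this routine. Sample sharing across levels is a feature of \texttt{Noisy-Insert}, not of \texttt{Noisy-Find-Nearest}: Algorithm~\ref{alg:nns_build_cover} begins each call by freshly querying the oracle for every $j\in Q$, and the threshold $2^{i-1}$ changes at every level, so even an unchanged candidate set must be re-examined. The paper avoids this by not needing implicit levels to be free at all; it simply bounds the total number of levels $i_\text{top}-i_\text{bottom}$ by $O(c^2\log n)$ via Lemma~\ref{lem:height_bound} and then bounds the per-level cost uniformly. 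The explicit/implicit distinction is used for a different purpose: to argue that $\left|\bigcup_{p\in\Tilde{Q}_i}\text{children}(p)\right|$ is maximized at the last explicitly-represented level $i^\ast$, hence is $\leq c^5$ uniformly.

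Second, the $c^{17}$ exponent does not fall out of ``$|Q|=O(c^{k_1})$ and height $=O(c^{k_2}\log n)$'' in the vague form you give. The chain in the paper is specific: the slack of $2^{i-2}$ in \texttt{Identify-Cover} forces one to compare $Q_i$ to $\Tilde{Q}_{i+1}$ rather than $\Tilde{Q}_i$, which incurs a quadratic blowup $\left|\bigcup_{p\in Q_i}\text{children}(p)\right| \le c^5\left|\bigcup_{p\in\Tilde{Q}_i}\text{children}(p)\right|^2 \le c^5\cdot(c^5)^2 = c^{15}$ per level, and multiplying by the $O(c^2\log n)$ height gives $c^{17}\log n$. (This is also why the Remark says the exponent can be brought down to $c^7$ by changing the lower threshold $L_t$.) You acknowledge the bookkeeping as an obstacle, but this is exactly the non-routine part of the proof, and it is what makes the stated exponent correct rather than some larger power.

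Third, on the $\kappa$ bound: the paper derives $\kappa = \max(\overline{\kappa},\kappa')$ from two separate sources---$\overline{\kappa}$, the average of the per-level gap terms, which is bounded by $2^{-2i_\text{bottom}}=O(d_{\min}^{-2})$ using Lemma~\ref{lem:krauthgamer_height}; and $\kappa'$, which comes from the \emph{final} \texttt{find-smallest-in-set} call on $Q_{i_\text{bottom}}$ and is bounded by $(d(\bq,\bx_q)-d(\bq,\bx_{q'}))^{-2}$. Your narrative about ``two ways the descent can terminate'' aims at the same two quantities, but the descent in \texttt{Noisy-Find-Nearest} does not terminate when the cover shrinks to a singleton---the for loop runs to $i_\text{bottom}$ and only then is \texttt{find-smallest-in-set} invoked---so the nearest-neighbor-gap term should be attributed to that final subroutine rather than to the tree traversal.
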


\begin{remark}
The dependence of $c^{17}$ can be improved to $c^7$ matching \cite{beygelzimer2006cover} by redefining $L_t = \min_j \hd_{q,j}(T_j) - C_{\delta/|Q|}(T_j) + 2^i$ in the pseudocode of \texttt{Identify-Cover}. For most instances, this will lead to better performance, though a potentially worse worst-case bound on $\kappa$ in pathological cases. 
\end{remark}


The above result scales as $O\left(c^{O(1)}\log^2(n)\right)$. The following corollary highlights that if $c$ is known, this can be improved to $O\left(c^{O(1)}\log(n)\log(\log(n))\right)$.

\begin{corollary}\label{cor:nns_query_time_bnd}
Under the same conditions as Theorem~\ref{thm:nns_query_time_bnd}, if any bound $\Tilde{c} \geq c$ on the expansion rate $c$ is given to the algorithm, the number of queries to the distance oracle is at most
\begin{align*}
	O\left( \Tilde{c}^{17}\log(n)\log\left(\frac{\Tilde{c}\log(n)}{\delta}\right) {\kappa} \right)
\end{align*}
calls to the stochastic oracle.
\end{corollary}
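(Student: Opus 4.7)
The plan is to retrace the proof of Theorem~\ref{thm:nns_query_time_bnd} and localize exactly where the supplied bound $\tilde{c}\ge c$ enters, namely through the value of $\alpha$ used for Bonferroni correction inside \texttt{Noisy-Find-Nearest}. In the original theorem, $\alpha$ is set to $n+1$ (since the height of the tree is crudely bounded by $n$), which is why each invocation of \texttt{Identify-Cover} is executed at confidence $\delta/\alpha=\Theta(\delta/n)$, producing the $\log(n/\delta)$ factor inside the confidence-width expression $C_{\delta/|Q|}(\cdot)$. When a bound $\tilde{c}$ is known, the algorithm instead uses
\[
\alpha \;=\; \min\!\left\{\left\lceil\tfrac{\log n}{\log(1+1/\tilde{c}^2)}+1\right\rceil,\, n\right\}+1,
\]
so the same proof goes through provided this smaller $\alpha$ still dominates the number of \texttt{Identify-Cover} calls whose error we must union-bound.

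The first step is to verify this domination. I would invoke the height bound for cover trees from \cite{beygelzimer2006cover}, which states that for a set with expansion constant $c$, the number of explicit levels between the top and bottom of the tree is at most $O\bigl(\log n/\log(1+1/c^2)\bigr)$. Since $c\le \tilde{c}$, this count is also bounded by the $\alpha$ above. Implicit levels incur no new oracle queries (a node present implicitly was already located through its explicit ancestor, and \texttt{Identify-Cover} just carries the same set down), so only explicit levels require a fresh application of the \texttt{Identify-Cover} guarantee. A union bound over at most $\alpha$ such calls thus ensures the overall failure probability remains $\delta$, exactly as in the proof of Theorem~\ref{thm:nns_query_time_bnd}.

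The second step is the arithmetic simplification. Using $\log(1+x)\ge x/2$ for $x\in(0,1]$, one gets $\alpha=O(\tilde{c}^2\log n)$, and consequently every confidence width inside \texttt{Identify-Cover} now carries $\log(\alpha/\delta)=O(\log(\tilde{c}\log(n)/\delta))$ rather than $O(\log(n/\delta))$. Plugging this revised width into the per-level sample-complexity analysis from the proof of Theorem~\ref{thm:nns_query_time_bnd} — which multiplies the width by $\kappa$ and by the combinatorial factor $c^{17}$ inherited from the packing/expansion arguments — and then multiplying by the number of explicit levels, bounded once more by $O(\log n)$ (since $c\le \tilde c$ and $\log(1+1/\tilde c^2)^{-1}=O(\tilde c^2)$ contributes only a polynomial in $\tilde{c}$, absorbed into the $\tilde{c}^{17}$ factor), yields the claimed
\[
O\!\left(\tilde{c}^{17}\log(n)\log\!\left(\tfrac{\tilde{c}\log(n)}{\delta}\right)\kappa\right).
\]

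The main obstacle is the bookkeeping in the second step: one must be careful that replacing $c$ by $\tilde{c}$ in the height bound is legitimate (it is, because the height bound is monotone in the expansion parameter), and that $\alpha$ correctly controls the union bound even though the algorithm does not actually know where the tree terminates. Both points reduce to the observation that only explicit levels produce new \texttt{Identify-Cover} invocations, which is inherent to the implicit/explicit representation described in Section~\ref{sec:nns_memory}. Once that reduction is in place, the remainder of the argument is a verbatim substitution into the proof of the theorem.
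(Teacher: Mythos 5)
Your proposal is essentially correct and follows the same route as the paper: trace the effect of $\tilde{c}$ through the choice of $\alpha$, verify $\alpha = O(\tilde{c}^2\log n)$, and observe that the only change in the final bound is replacing $\log(n/\delta)$ with $\log(\alpha/\delta) = O(\log(\tilde{c}\log(n)/\delta))$ in the confidence widths (and replacing $c$ by $\tilde{c}$ throughout). The paper's proof is just a terse version of this. One small inaccuracy in your write-up: your parenthetical claim that ``implicit levels incur no new oracle queries'' because \texttt{Identify-Cover} ``just carries the same set down'' does not reflect the pseudocode — \texttt{Identify-Cover} is called afresh at every level between $i_{\text{top}}$ and $i_{\text{bottom}}$ and does issue new queries even when the cover set is unchanged. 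The argument you actually need (and which is what the paper uses) is simply that the traversal spans at most $i_{\text{top}} - i_{\text{bottom}} \le \alpha - 1$ levels by Lemma~\ref{lem:height_bound}, so both the union bound and the outer sum over levels are controlled by $\alpha$; the implicit/explicit distinction is used elsewhere (to bound cover-set sizes), not to prune \texttt{Identify-Cover} calls. With that correction, the bookkeeping is exactly as you describe: the $c^2$ from the height bound is already part of the $c^{17}$ factor in Theorem~\ref{thm:nns_query_time_bnd}, so the displayed bound follows.
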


The term ${\kappa}$ captures the average effect of noise on this problem. It is similar to the term $\overline{\Delta^{-2}}$ in given in \cite{mason2019learning} for identifying the Nearest Neighbor Graph of $\X$ from a stochastic distance oracle. 
As the noise variance goes to $0$, this term becomes $1$. Furthermore, $\kappa$ is adaptive to the geometry of $\X$ and in most cases is much smaller than its bound in Theorem~\ref{thm:nns_query_time_bnd}. Intuitively, the bound on $\kappa$ states that the number of repeated samples is never worse that what is necessary to 1) distinguish between $\bq$'s nearest neighbor and second nearest neighbor and 2) distinguish between any two points in $\X$. Crucially, however, \texttt{Noisy-Find-Nearest} \emph{adapts} to the level of noise, the geometry of $\X$ and need not know these values a priori.

\subsection{Insertion Time and Build Time Complexity}
Next, we bound the number of calls to the distance oracle necessary to insert a new point $p$ into a cover tree $\T$. 

\begin{theorem}\label{thm:nns_insert_complex}
Fix $\delta > 0$, a cover tree $\T$ on set $\X$, and a point to insert $p$. Let $|\X| = n$ and assume that the expansion rate of $\X\cup \{p\}$ is $c$. Run \texttt{Noisy-Insert} with failure probability $1-\delta$ and pass it the root level cover set: $C_{i_\text{top}}$ and level $i = i_\text{top}$.
If \texttt{Noisy-Insert} succeeds, which occurs with probability $1-\delta$, then it returns a cover tree on $\X \cup \{p\}$ in at most
\begin{align*}
	O\left( c^7\log(n)\log\left(\frac{n}{\delta}\right) \overline{\kappa_p} \right)
\end{align*}
calls to the noisy distance oracle
where parameter $\overline{\kappa_{p}}$ is defined in the proof and depends on $\X$ and $p$. 
\end{theorem}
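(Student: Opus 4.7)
The plan is to analyze \texttt{Noisy-Insert} by decomposing its total oracle cost into three factors: the number of recursive calls, the size of the working set $Q$ at each call, and the number of samples the thresholding bandit inside lines 2--8 uses per point. This mirrors the structure of the proof sketch of Theorem~\ref{thm:nns_query_time_bnd}, but with the threshold bandit replacing \texttt{Identify-Cover} and with the sample-sharing step playing an essential role in avoiding an extra $\log n$ factor.

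First I would bound the height of the tree. By the nesting and separation invariants together with the expansion constant bound (exactly the argument used in \cite{beygelzimer2006cover}, and already invoked for Theorem~\ref{thm:nns_query_time_bnd}), the number of explicit levels that \texttt{Noisy-Insert} visits is $O(\log n)$. Next I would bound $|Q|$ at each recursive call. Since every $j \in Q$ is a child of some node in $Q_i \subset C_i$ and the points in $C_i$ are $2^i$-separated, a standard packing argument with the expansion constant gives $|Q| = O(c^k)$ with $k$ a small constant (e.g.\ $k \le 4$ under the conventions of \cite{beygelzimer2006cover}). Combined, this gives an $O(c^k \log n)$ bound on the total number of distinct point-oracle targets used across all recursive calls.

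For the per-point sample complexity, the thresholding bandit must decide, for each $j \in Q$, whether $d(p,j) \le 2^i$ or $d(p,j) > 2^i$. Using the anytime bound $C_{\delta/n}(t)$, this requires $O(\Delta_{p,j,i}^{-2}\log(n/\delta))$ samples, where $\Delta_{p,j,i} := |d(p,j) - 2^i|$ is the gap at level $i$. Because \texttt{Noisy-Insert} \emph{shares} samples across recursive calls via the empirical-estimate inputs carried from the parent call (see line 15 of Algorithm~\ref{alg:nns_insert}), a point $j$ that persists in the working set over several consecutive levels is effectively only charged the smallest of its gaps. Setting $\kappa_{p,j} := \min_i \Delta_{p,j,i}^{-2}$ over the levels at which $j \in Q$, and defining $\overline{\kappa_p}$ as the appropriate average of $\kappa_{p,j}$ over $j$ and active levels, the aggregate sample count telescopes into the claimed $O(c^7 \log n \log(n/\delta)\,\overline{\kappa_p})$. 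The extra powers of $c$ beyond the raw child-packing bound account for the number of consecutive levels at which a given point can re-enter $Q$ before being eliminated, which is again controlled via the expansion constant as in \cite{beygelzimer2006cover}.

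The main obstacle is the same one that makes Theorem~\ref{thm:nns_query_time_bnd} delicate: one must carefully define $\overline{\kappa_p}$ so that (i) sample sharing from one recursive call to the next collapses what would naively be a product of $\log n$ and per-level gap terms into a single sum, and (ii) the Bonferroni correction of $\delta/n$ is sufficient to cover \emph{all} recursive calls despite the stopping level $i_p$ being unknown a priori. Point (ii) is handled by the same union-bound reuse argument that appears in the proof of Lemma~\ref{lem:nns_insert_acc}; the new technical content is (i), together with the $c^7$ packing/amortization accounting for the working set across consecutive recursive calls.
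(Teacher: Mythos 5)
Your decomposition — bound the height, bound $|Q|$ per recursive call, bound per-point samples — matches the paper's overall structure, and your identification of the two concerns (telescoping the sum and the Bonferroni correction across recursive calls) is exactly the right place to focus. However, the central accounting step in your proposal departs from what the paper actually does, and it contains a sign error that would need to be fixed.

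The paper's proof does \emph{not} telescope via sample sharing. It performs a naive sum over levels: it bounds the number of new oracle calls at level $i$ by $c_1 |Q| \log(n/\delta)\kappa_i^{\mathrm{avg}}$, bounds $|Q|\le c^5$ uniformly via the ``last explicit cover'' argument and Lemma~\ref{lem:size_of_largest_set}, and then sums over the $O(c^2\log n)$ levels from Lemma~\ref{lem:height_bound}, giving $c^5 \cdot c^2 = c^7$ and defining $\overline{\kappa_p}$ as the arithmetic mean of the $\kappa_i^{\mathrm{avg}}$ over levels. The role of sample sharing in the paper is purely for the confidence-interval accounting: it lets the event $\mathcal{E}$ hold uniformly across all recursive calls under a single $\delta/n$ union bound, avoiding a per-level split of $\delta$ and the attendant extra $\log(\text{height})$ inside the confidence width. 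It is \emph{not} used to collapse the per-level sum into a per-point maximum. Your telescoping idea — charging each $j$ only once, at its hardest threshold — is conceptually sound and would yield a tighter bound, but it is a genuinely different route from the paper's.

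Two concrete issues with your version. First, you set $\kappa_{p,j} := \min_i \Delta_{p,j,i}^{-2}$, but $\min_i \Delta_{p,j,i}^{-2}$ corresponds to the \emph{largest} gap and the \emph{fewest} samples. If shared samples accumulate until the last level at which $j$ is tested, the binding constraint is the \emph{smallest} gap, i.e.\ $\max_i \Delta_{p,j,i}^{-2}$; your prose says this correctly (``charged the smallest of its gaps'') but the formula says the opposite. Second, your exponent accounting does not reach $c^7$: you bound $|Q|$ by $O(c^4)$ via child-packing, whereas the paper needs $c^5$ (Lemma~\ref{lem:size_of_largest_set}), and the remaining $c^2$ comes directly from the height bound $i_\mathrm{top}-i_\mathrm{bottom}=O(c^2\log n)$, not from ``re-entry'' of points across levels (once a point enters $Q$, nesting keeps it in all lower levels; it does not re-enter). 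Your statement that the tree height is $O(\log n)$ also drops the $c^2$ factor that produces part of the $c^7$.
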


\begin{remark}
As in the statement of Theorem~\ref{thm:nns_query_time_bnd}, the term $\overline{\kappa_p}$ captures the average effect of noise on this problem, and as the noise variance goes to $0$, this term becomes $1$. 
\end{remark}

As discussed in Section~\ref{sec:nns_insert}, to construct a cover tree from scratch, one need only call \texttt{Noisy-Insert} on each point in $\X$ and add them the tree one at a time. The following theorem bounds the complexity of this process. 

\begin{theorem}\label{thm:nns_full_build_time}
Fix $\delta > 0$ and set $n$ points $\X$. Assume that the expansion rate of $\X$ is $c$. Calling \texttt{Noisy-Insert} with failure probability $\delta/n$ on each point in $\X$ one at a time, returns a cover tree $\T$ on $\X$ correctly with probability at least $1-\delta$ in at most
\begin{align*}
	O\left( c^7n\log(n)\log\left(\frac{n}{\delta}\right) \widetilde{\kappa} \right)
\end{align*}
calls to the noisy distance oracle
where $\widetilde{\kappa} := \frac{1}{n}\sum_{i\in \X}\overline{\kappa}_i$ for $\overline{\kappa}_i$ defined in the proof of Theorem~\ref{thm:nns_insert_complex}.
\end{theorem}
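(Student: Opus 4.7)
The plan is to derive Theorem~\ref{thm:nns_full_build_time} directly by $n$ sequential applications of Theorem~\ref{thm:nns_insert_complex}, one per inserted point, combined with a union bound over all insertions. Concretely, let $p_1,\ldots,p_n$ be an arbitrary ordering of $\X$, let $\T_0$ be the empty tree, and let $\T_i$ denote the tree produced after inserting $p_i$ into $\T_{i-1}$ via \texttt{Noisy-Insert} run with failure probability $\delta/n$. The construction is then just the composition of these $n$ calls, so the total query count and overall success event decompose cleanly across the $i$'s.

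For correctness, I would invoke Lemma~\ref{lem:nns_insert_acc} at each step: each individual call to \texttt{Noisy-Insert} with failure probability $\delta/n$ returns a valid cover tree on $\{p_1,\ldots,p_i\}$ with probability at least $1-\delta/n$. A union bound over the $n$ insertions shows that with probability at least $1-\delta$, every intermediate tree (and in particular $\T_n$) satisfies the nesting, covering, and separation invariants. This matches the statement of the Corollary on construction already used in the excerpt.

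For the sample complexity, conditioned on the high-probability success event above, Theorem~\ref{thm:nns_insert_complex} applied to the $i$-th insertion yields at most $O(c^7 \log(n)\log(n/\delta)\,\overline{\kappa}_{p_i})$ queries, where I use $\log n$ rather than $\log i$ in each summand (absorbing the $\log(n/(\delta/n)) = \log(n^2/\delta) = O(\log(n/\delta))$ from the per-call failure probability $\delta/n$ into the same asymptotic factor). Summing over $i=1,\ldots,n$ gives
\begin{equation*}
\sum_{i=1}^n O\!\left(c^7\log(n)\log(n/\delta)\,\overline{\kappa}_{p_i}\right)
\;=\; O\!\left(c^7\log(n)\log(n/\delta)\,\sum_{i=1}^n \overline{\kappa}_{p_i}\right)
\;=\; O\!\left(c^7 n\log(n)\log(n/\delta)\,\widetilde{\kappa}\right),
\end{equation*}
by definition of $\widetilde{\kappa} = \frac{1}{n}\sum_i \overline{\kappa}_i$, which is exactly the claimed bound.

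The main obstacle is a subtle issue about the expansion constant: Theorem~\ref{thm:nns_insert_complex} is stated in terms of the expansion rate of $\X \cup \{p\}$, but during construction the $i$-th insertion operates on the strict subset $\{p_1,\ldots,p_i\} \subsetneq \X$, and expansion constants are not generally monotone under taking subsets. I would address this by interpreting $c$ in the theorem statement as a uniform bound on the expansion rate of all prefixes $\{p_1,\ldots,p_i\}$ (equivalently, by choosing the insertion order so intermediate expansion rates are controlled, or by simply noting that in our application $c$ plays the role of an a priori upper bound that holds throughout); this is the standard convention in the cover-tree literature and does not alter the form of the final bound. A minor secondary point is verifying that the $\log(n/\delta)$ factor absorbs the per-call $\log(n^2/\delta)$ arising from the $\delta/n$ correction, which it does up to a constant inside the $O(\cdot)$.
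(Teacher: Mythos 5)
Your proof matches the paper's argument essentially line for line: iterate \texttt{Noisy-Insert} with per-call failure probability $\delta/n$, union bound over the $n$ insertions for correctness, and sum the per-insertion bounds from Theorem~\ref{thm:nns_insert_complex} to obtain the $\widetilde{\kappa}$ average. You also flag, more carefully than the paper does, the non-monotonicity of the expansion constant over prefixes and the absorption of $\log(n^2/\delta)$ into $O(\log(n/\delta))$, but these are refinements of the same route rather than a different one.
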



\subsection{Extension to the Nearest Neighbor Graph Problem}

In this section we extend the results of \cite{mason2019learning} for learning \emph{nearest neighbor graphs} from a stochastic distance oracle. In this setting, given a set $\X \in (\M, d)$, one wishes to learn the directed graph $G(\X)$ that connects each $\bx \in \X$ to its nearest neighbor in $\X\backslash\{\bx\}$. This problem is well studied in the noiseless regime and at times referred to as the all nearest neighbor problem \cite{clarkson1983fast, sankaranarayanan2007fast, vaidya1989ano}. \cite{mason2019learning} provide the first algorithm that learns the nearest neighbor graph using only a stochastic oracle and show that in special cases it achieves a rate of $O(n\log(n)\Delta^{-2})$ which is matches the optimal rate for the noiseless problem with only a multiplicative penalty of $\Delta^{-2}$ accounting for the effect of noise. Unfortunately, the condition necessary to show that result is stringent-- motivating the question of if similar performance can be achieved under more general conditions. We answer this question in the affirmative and show how to extend the \texttt{Bandit-Cover-Tree} to achieve near optimal performance on the nearest neighbor graph problem without the need for any additional assumptions on the data. This proceeds in two steps:
\begin{enumerate}
    \item Build a cover tree $\T$ on $\X$ with probability $1-\delta/2$. 
    \item For each $\bx \in \X$, find its nearest neighbor in $\X\backslash\{\bx\}$ using $\T$ with probability $1-\delta/2n$.
\end{enumerate}
The above is sufficient to specify each edge of the nearest neighbor graph. Note that $\T$ is a cover tree on $\X$ not $\X\backslash\{\bx\}$ so we cannot blindly use \texttt{Noisy-Find-Nearest} as it will return $\bx$ itself. To find $\bx$'s nearest neighbor in $\X\backslash\{\bx\}$, one may instead
modify \texttt{Noisy-Find-Nearest} so that \texttt{Identify-Cover} is called on the set $Q_{i}\backslash\{\bx\}$ instead of $Q_i$. Then, when the algorithm terminates, the final set $Q_{i_\text{bottom}}$ will have 2 points: $\bx$ and its nearest neighbor in $\X\backslash\{\bx\}$. A simple union bound ensures that this process succeeds with probability $1-\delta$. The following Lemma bounds the total number of samples. 
\begin{lemma}\label{lem:nns_nngraph_comp}
    Via the above procedure, \texttt{Bandit-Cover-Tree} returns the nearest neighbor graph of $\X$ with probability $1-\delta$ in 
    $$O\left(c^7n\log(n)\log\left(\frac{n}{\delta}\right) \widetilde{\kappa} + \sum_{\bx\in \X}c^7\log(n)\log\left(\frac{n}{\delta}\right) {\kappa}_{\bx}\right) $$
    for $\widetilde{\kappa}$ defined in Theorem~\ref{thm:nns_full_build_time} and ${\kappa}_x$ as in Theorem~\ref{thm:nns_query_time_bnd}. 
\end{lemma}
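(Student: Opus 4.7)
The plan is to analyze the two-step procedure by bounding the sample cost and failure probability of each step separately and then combining via a union bound. For Step~1, invoking Theorem~\ref{thm:nns_full_build_time} with failure probability $\delta/2$ yields a valid cover tree on $\X$ in at most $O\!\left(c^7 n \log(n)\log(n/\delta)\widetilde{\kappa}\right)$ oracle calls, where the $\log(2n/\delta)$ factor is absorbed into $\log(n/\delta)$ up to universal constants.

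For Step~2, I first need to verify that the modified search — which replaces every candidate cover set $Q_i$ by $Q_i \setminus \{\bx\}$ before passing it to \texttt{Identify-Cover} — still returns the nearest neighbor of $\bx$ in $\X \setminus \{\bx\}$. The key invariant underlying \texttt{Noisy-Find-Nearest} is that at every level $i$, the descending cover set contains an ancestor of the true nearest neighbor. I would check that removing $\bx$ from the cover sets does not break this invariant: the nesting and covering-tree invariants imply that $\bx$'s true nearest neighbor $\by^{\star} \in \X \setminus\{\bx\}$ has a well-defined ancestor chain in $\T$, and by the separation invariant $\by^{\star} \ne \bx$ appears as a distinct explicit node at some level, whose ancestors at coarser levels are distinct from $\bx$'s node and thus survive the exclusion. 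With this correctness verified, the sample-complexity analysis of Theorem~\ref{thm:nns_query_time_bnd}, combined with the improved $c^7$ bound from the remark following that theorem, applies verbatim with query point $\bx$. Running each of the $n$ per-point searches with failure probability $\delta/(2n)$ therefore costs at most $O\!\left(c^7 \log(n)\log(n/\delta)\kappa_{\bx}\right)$ oracle calls for point $\bx$.

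Summing the $n$ per-point bounds with the Step~1 construction cost gives exactly the total claimed. A union bound over the $n+1$ possible failure events (one for construction, $n$ for the searches) controls the overall failure probability by $\delta/2 + n \cdot \delta/(2n) = \delta$. The main technical obstacle is the correctness verification sketched above: one must carefully check that the recursive inductive invariant of \texttt{Noisy-Find-Nearest} (existence of an ancestor of $\by^{\star}$ in every surviving cover set) is preserved under the $\{\bx\}$-exclusion, using the cover-tree invariants. Once that is in place, the bound follows immediately by plugging into Theorems~\ref{thm:nns_full_build_time} and~\ref{thm:nns_query_time_bnd}.
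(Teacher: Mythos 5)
Your overall approach --- bound Step~1 via Theorem~\ref{thm:nns_full_build_time}, bound Step~2 per-point via Theorem~\ref{thm:nns_query_time_bnd}, and union-bound over the $n+1$ failure events --- is exactly the one-line argument the paper gives. You also flag a real subtlety the paper leaves implicit: the $c^7$ prefactor on the search term requires the modified \texttt{Identify-Cover} from the remark following Theorem~\ref{thm:nns_query_time_bnd}, not the $c^{17}$ bound as stated.

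However, your correctness verification for the modified search contains a genuine error. You claim that $\by^\star$'s ancestors at coarser levels "are distinct from $\bx$'s node" by the separation invariant, and hence survive the exclusion. The separation invariant only separates distinct nodes \emph{within} a level; it says nothing about ancestry. If $\by^\star$ is a descendant of $\bx$ in the cover tree --- which happens precisely when $\by^\star$ is close to $\bx$, the typical situation when $\by^\star$ is $\bx$'s nearest neighbor --- then $\by^\star$'s ancestor at some level $i$ is $\bx$ itself, and between $\bx$'s first explicit appearance and $\by^\star$'s first explicit appearance, $\bx$'s only child is its implicit self. Removing $\bx$ from the candidate set at such a level eliminates $\bx$'s entire subtree from the traversal, so $\by^\star$ is never reached. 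Indeed, the paper's own prose says the terminal set $Q_{i_\text{bottom}}$ should contain \emph{both} $\bx$ and $\by^\star$, which is inconsistent with deleting $\bx$ from the cover set at every level; the intended modification must retain $\bx$ in the traversal (so its subtree stays reachable) and only exclude $\bx$ from the minimization inside \texttt{Identify-Cover}, anchoring the cover resolution to $d(\bx, \X\setminus\{\bx\})$ rather than to $d(\bx,\bx)=0$. Your sample-complexity bookkeeping is fine once correctness is in place, but the ancestor-distinctness claim needs to be replaced by this more careful argument (or by directly tracking that the subtree of $\bx$ remains explorable).
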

The proof follows by combining the guarantees of Theorems \ref{thm:nns_query_time_bnd} and \ref{thm:nns_full_build_time}. 
In particular, the above bound scales as $O(c^7n\log^2(n)\kappa)$ which matches the rate of \cite{mason2019learning} with an additional $\log$ factor. Importantly, the above result makes \emph{no assumptions} on the set $\X$, and instead the bound scales with the expansion rate, $c$. Hence, we achieve near optimal performance for the nearest neighbor graph problem with under \emph{far more general} conditions. 

\section{Conclusion}

In this paper, we introduced the \texttt{Bandit Cover Tree}  framework for the Nearest Neighbor Search under Uncertainty (NNSU) problem. \texttt{BCT} builds on top of the \texttt{Cover Tree} algorithm by \cite{beygelzimer2006cover}. 
We present three methods, \texttt{Noisy-Find-Nearest}, \texttt{Noisy-Insert}, and \texttt{Noisy-Remove}, and bound their accuracy, memory footprint, build complexity, insertion and removal complexities, and query complexities. In particular, we show a query complexity that is $O(\log(n))$, insertion complexity that is $O(\log^2(n))$, removal complexity that is $O(\log^2(n))$, and a construction complexity of $O(n\log^2(n))$. The query complexity matches the state of the art $n$ dependence for the NNS problem of $O(\log(n))$ despite the added uncertainty of the NNSU problem. The additional $\log(n)$ term present in the insertion and removal guarantee stems from a union bound which is necessary when dealing with a stochastic oracle, though it may be possible that by giving the algorithm access to the expansion constant, this can be improved to an additional doubly logarithmic term instead. Hence, the insertion, construction, and removal complexities are also near the state of the art for NNS. Lastly a memory footprint of $O(n)$ and accuracy of $1-\delta$ are both optimal in this problem. In particular, a tree with $n$ leaves requires $\Omega(n)$ memory to store, and though a dependence on $\delta$ is unavoidable when dealing with a stochastic oracle, \texttt{BCT}s enjoy a probability $1-\delta$ probability of success for any specified $\delta > 0$. 

Note that we focus on controlling the number of calls to the distance oracle in this work and assume that an individual call requires $O(1)$ work. 
We expect that for practical problems the bulk of the computational effort will be many repeated calls to the oracle.
It is an open question for future work to control the computational complexity in the noisy regime considering all operations, not just calls to the stochastic oracle. 

Furthermore, the results depend strongly on the expansion rate, $c$. Some works such as \cite{haghiri2017comparison, dasgupta2013randomized} trade accuracy for improved dependence on $c$ or other measures of dimension. Instead, these algorithms guarantee that $\bx_q$ is correctly returned with probability at least $1 - \delta_{c, n}$ for any $\bq$.
Though $\delta_{c,n}$ is tunable, it often depends on $c$ or other parameters potentially unknown to the practitioner. These methods often achieve good empirical performance, however. It may be interesting to develop methods that achieve lower theoretical accuracy but enjoy a softer dependence on $c$. 


\bibliography{refs}

\clearpage
\onecolumn
\tableofcontents
\newpage
\appendix
\section{Additional Methods and Psuedocode}\label{sec:nns_supplementary}

\subsection{Finding the closest point in a set}

In this subsection, we provide the additional method used by \texttt{Noisy-Find-Nearest} to find the closest point among a set of alternatives. 

\begin{algorithm}[tbh]\label{alg:find_smallest}
   \caption{\texttt\texttt{find-smallest-in-set}}
\begin{algorithmic}[1]
\Require{Set $Q$, query point $\bq$, callable distance oracle $\query{\cdot}{\cdot}$}
\If{$|Q| = 1$}
\Return{$i\in Q$}
\EndIf
\State{Call oracle for each $j \in Q$ and initialize $T_j \rightarrow 1$}
\State{Define $j^\ast = \arg\min_{j \in Q}\hd_{q,j}(T_{q,j}) + C_{\delta/n}(T_{q,j})$}
\State{Define set $K = \{j: \hd_{q,j^\ast}(T_{q,j^\ast}) + C_{\delta/n}(T_{q,j^\ast}) < \hd_{q,j}(T_{q,j}) - C_{\delta/n}(T_{q,j})\}$}
	\While{$|Q \backslash K| > 1$}
	    \State{Call oracle $\query{q}{j}$ for every $j \in Q \backslash K$, update $T_j$s and set $K$}
	\EndWhile
	\Return{Return $i \in Q\backslash K$}
\end{algorithmic}
\end{algorithm}

\subsection{Approximate Nearest Neighbors}

Here we provide psuedocode for the \texttt{Approx-Noisy-Find-Nearest} routine. It is similar to \texttt{Noisy-Find-Nearest} except that it uses a thresholding bandit similar to the one used in \texttt{Noisy-Insert} and \texttt{Noisy-Remove} to check the condition $d(\bq, Q_i) \geq 2^{i-1}(1+1/\epsilon)$. 

\begin{algorithm}[tbh]
   \caption{\texttt{Approx-Noisy-Find-Nearest} \label{alg:nns_approx_search} }
\begin{algorithmic}[1]
\Require{Cover tree $\T$, failure probability $\delta$, expansion constant $c$ if known, query point $\bq$, callable distance oracle $\query{\cdot}{\cdot}$, subroutines \texttt{Identify-Cover} and \texttt{find-smallest-in-set}.}
\State{Let $Q_\infty = C_\infty$, $\alpha = n$}
\For{$i=\infty $ down to $i = -\infty$}
	\State{Let $Q =\bigcup_{p \in Q_{i}} \text{children}(p) $ }
	\If{$c$ is known:}
		\State{Let $\alpha =\min\left\{\left\lceil\frac{\log(n)}{\log(1 + 1/c^2)} + 1\right\rceil, n\right\}$}
	\EndIf
	\State{$\backslash\backslash$ Identify the set: $\{j\in Q: d(q, j) \leq \min_{k\in Q}d(q,k) + 2^{i-1}\}$}
	\State{$Q_{i-1} = \texttt{Identify-Cover}\left(Q, \delta/\alpha, \query{\cdot}{\cdot}, \bq, i\right)$}
	\State{$\backslash\backslash$ Check if $d(q, Q_{i-1}) \geq 2^i(1+1/\epsilon)$}
	\State{Call oracle for each $j \in Q_{i-1}$ and initialize $T_j \rightarrow 1$}
	\State{Define set $K = \{j: \hd_{q,j}(T_{q,j}) + C_{\delta/n}(T_{q,j}) \leq 2^{i}(1+1/\epsilon) \text{ or } \hd_{q,j}(T_{q,j}) - C_{\delta/n}(T_{q,j}) > 2^{i}(1+1/\epsilon)\}$}
	\While{$|K| \neq |Q_{i-1}|$}
	    \State{Call oracle $\query{q}{j^\ast}$ for  $j^\ast(t) = \arg\min_{j\not\in K} \hd_{q,j}(T_{q,j}) - C_{\delta/n}(T_{q,j})$}
	    \If{$\exists j \in Q_{i-1}:\hd_{q,j}(T_{q,j}) + C_{\delta/n}(T_{q,j}) \leq 2^i(1+1/\epsilon)$}
	    \State{\textbf{Continue} to next iteration of For loop}
	    \EndIf
	\EndWhile
	\State{$\backslash\backslash$ We have satisfied the condition that $d(\bq, Q_{i-1}) \geq 2^i(1+1/\epsilon)$ and can return the closest.}
	
	\Return{\texttt{find-smallest-in-set}$(Q_{i-1})$}
\EndFor \\
\Return{$Q_{-\infty}$, a singleton set containing $\bx_q$.}
\end{algorithmic}
\end{algorithm}


\subsection{Algorithm to remove a point from a cover tree}

\subsubsection{Algorithm}

Given a cover tree $\T$ and a point $p$ that exists in the tree, we show how to remove $p$ efficiently such that the resulting tree also forms a cover tree the obeys the nesting, covering tree, and separation invariances. The process of removal is similar to insertion but slightly more complicated as we must find new parents for all of $p$'s children in $\T$. We provide pseudocode in Algorithm~\ref{alg:nns_remove}. Similar to \texttt{Noisy-Insert} it adapts a threshold bandit routine to minimize the number of calls to the stochastic oracle while controlling the probability of error.

\begin{algorithm}
   \caption{\texttt{Noisy-Remove} \label{alg:nns_remove} }
\begin{algorithmic}[1]
\Require{Cover tree $\T$ on $n$ points, past cover sets $\{Q_i, \cdots, Q_\infty\}$, failure probability $\delta$, point $p$ to be removed, callable distance oracle $\query{\cdot}{\cdot}$, level $i$}
\Require{Empirical estimates $\hd_{p,j}$ and $T_j$ for all $j \in C_i$ $\backslash\backslash$ let both be $0$ if no samples have been collected}
\State{Let $Q = \bigcup_{p\in Q_i}\text{children}(p) $}
\State{$\backslash\backslash$ Remove from lower levels first}
\State{$\T,  \{j,k: \hd_{j,k}(T_{j,k})\}, \{j,k: T_{j,k}\} \leftarrow $  \texttt{Noisy-Remove}$(\T, \{Q_{i-1}, \cdots, Q_\infty\}, \delta, p, \query{\cdot}{\cdot}, i-1, \{j,k: \hd_{j,k}(T_{j,k})\}, \{j,k : T_{j,k}\} )$}
\If{$p \in Q$}
	\State{Remove $p$ from $C_{i-1}$ and $\text{children}(\text{parent}(p))$  }
	\For{$q \in \text{children}(p)$}
		\State{$i' \leftarrow i-1$}	
		\State{$\backslash\backslash$ compute the set $\{j \in Q_{i'}: d(q,j) \leq 2^{i'} \}$}
		\State{Query oracle once for each point in $Q \cap\{j: T_{q,j}=0\}$}
		\State{Initialize $T_{q,j} \leftarrow 1$, update $\hd_{q,j}$ for each $j \in Q \cap\{j: T_{q,j}=0\}$}
		\State{Known points: $K = \{j: \hd_{q,j}(T_{q,j}) + C_{\delta/n^2}(T_{q,j}) \leq 2^{i'} \text{ or } \hd_{q,j}(T_{q,j}) - C_{\delta/n^2}(T_{q,j}) > 2^{i'}\}$}
		\While{$|K| \neq |Q_{i'}|$}
			\State{Call oracle $\query{q}{j^\ast}$ for  $j^\ast(t) = \arg\min_{j\not\in K} \hd_{q,j}(T_{q,j}) - C_{\delta/n^2}(T_{q,j})$}
			\State{Update $T_{q,j^\ast}, \hd_{q, j^\ast}$}
			\State{Update set $K$}
		\EndWhile
		\While{$ \{j \in Q_{i'}: \hd_{q,j}(T_{q,j}) + C_{\delta/n^2}(T_{q,j}) \leq 2^{i'} \} = \emptyset  $}
			\State{Add $q$ to the sets $C_{i'}$ and $Q_{i'}$. Increment $i'$. }
			\State{$\backslash\backslash$ for the incremented $i'$, recompute the set $\{j \in Q_{i'}: d(q,j) \leq 2^{i'} \}$}
			\State{Known points: $K = \{j: \hd_{q,j}(T_{q,j}) + C_{\delta/n^2}(T_{q,j}) \leq 2^{i'} \text{ or } \hd_{q,j}(T_{q,j}) - C_{\delta/n^2}(T_{q,j}) > 2^{i'}\}$}
			\While{$|K| \neq |Q_{i'}|$}
				\State{Call oracle $\query{q}{j^\ast}$ for  $j^\ast(t) = \arg\min_{j\not\in K} \hd_{q,j}(T_j) - C_{\delta/n^2}(T_j)$}
				\State{Update $T_{q,j^\ast}, \hd_{q, j^\ast}$}
				\State{Update set $K$}
			\EndWhile
		\EndWhile
		\State{Choose any $q' \in  \{j \in Q_{i'}: \hd_{q,j}(T_j) + C_{\delta/n}(T_j) \leq 2^{i'} \}$}
		\State{make $q' = \text{parent}(q)$}
	\EndFor
\EndIf
\end{algorithmic}
\end{algorithm}

Note that when we compute the distance to the set $Q_{i'}$ in lines 13 and 20, due to the nesting invariance of cover trees and the fact that samples are reused between rounds, $T_i > 0$ for all $i\in Q_{i'}$. Hence, it is unnecessary to collect any initial samples. Note that \texttt{Noisy-Remove} not only queries distances to the point $p$ to be removed but also to $p$'s children in order to find them new parents. Because of this, we index samples as $T_{i,j}$ denoting the number of calls to the distance oracle $\query{i}{j}$. Furthermore, we union bound with $\delta/n^2$ instead of $\delta/n$ where the additional factor of $n$ derives from a trivial bound that $p$ has at most $n-1$ children nodes. If the expansion rate is known, it is possible to union bound as $\delta/c^4n$ since Lemma~\ref{lem:num_children_bound} bounds the number of children as $c^4$ for any node $p$. However, as the other factor of $n$ remains unchanged by this, the improvement from $\log(n^2/\delta)$ to $\log(c^4n/\delta)$ only impacts constant factors in the sample complexity. 

\subsubsection{Theoretical Guarantees}

The following lemma guarantees that \texttt{Noisy-Remove} succeeds with high probability. The argument is similar to that of Lemma~\ref{lem:nns_insert_acc} in that it relies on correctly identifying the set $\{i \in Q: d(p, i) \leq 2^i\}$. 

\begin{lemma}\label{lem:nns_remove_acc}
	Fix any $\delta> 0$. Let $\T$ be a cover tree on a set $\X$ and $p\in \X$ be a point to remove. \texttt{Noisy-Remove} correctly returns a cover tree on $\X \backslash \{p\}$ with probability at least $1-\delta$.
\end{lemma}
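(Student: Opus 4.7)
The plan is to mirror the argument of Lemma~\ref{lem:nns_insert_acc}. First I would define a good event $\mathcal{E}$ on which every confidence interval $|\hd_{j,k}(T_{j,k}) - d_{j,k}| \leq C_{\delta/n^2}(T_{j,k})$ invoked by the algorithm holds uniformly over time. Because \texttt{Noisy-Remove} only queries distances between $p$ or one of its at most $n-1$ children and the remaining points of $\X$, at most $n^2$ distinct pairs are ever touched; the anytime property of $C_\delta(\cdot)$ combined with a union bound then yields $\P(\mathcal{E}) \geq 1-\delta$. The rest of the argument is deterministic on $\mathcal{E}$.

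Next I would show that, conditioned on $\mathcal{E}$, every threshold-bandit while-loop inside \texttt{Noisy-Remove}---which decides whether each candidate $j$ satisfies $d(q, j) \leq 2^{i'}$---terminates in finite time and returns the correct classification. Termination follows from $C_{\delta/n^2}(t) \to 0$ together with $d(q, j) \neq 2^{i'}$ (which may be assumed without loss, as in \cite{beygelzimer2006cover}); correctness follows because on $\mathcal{E}$ the sign of $\hd_{q,j}(T_{q,j}) - 2^{i'}$ with its confidence margin agrees with the sign of $d(q,j) - 2^{i'}$. Each loop therefore computes the set $\{j : d(q,j) \leq 2^{i'}\}$ exactly, and the non-empty test that drives the promotion loop is also resolved correctly.

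It then remains to verify the three cover-tree invariants of the returned tree, by induction on levels processed from the bottom up in the order the recursion unwinds. \emph{Nesting} is clear because $p$ is explicitly deleted from every level it inhabited and any promoted child is added to all intermediate levels. \emph{Covering tree} holds for each orphaned child $q$ of $p$ because $i'$ is incremented until the certified candidate set at level $i'$ is non-empty, which on $\mathcal{E}$ witnesses an actual $q' \in C_{i'}$ with $d(q, q') \leq 2^{i'}$ that is then assigned as the new parent.

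The main obstacle will be \emph{separation} when a child $q$ is promoted to a new level $i'$: the algorithm only certifies $d(q, j) > 2^{i'}$ for $j$ in the working cover set $Q_{i'}$, which may be a strict subset of $C_{i'}$. I would close this gap by reusing the separation argument from the correctness proof of \cite{beygelzimer2006cover}: any $j \in C_{i'} \setminus Q_{i'}$ must descend from an ancestor that was pruned at some higher level $i'' > i'$ where $d(q, \cdot) > 2^{i''}$ had already been certified, and the covering-tree invariant along the chain from that ancestor down to $j$ combined with the triangle inequality then yields $d(q, j) > 2^{i'}$. Combined with the earlier paragraphs, this produces a valid cover tree on $\X \setminus \{p\}$ on the event $\mathcal{E}$, and the lemma follows.
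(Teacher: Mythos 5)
Your opening two paragraphs---the good event $\mathcal{E}$ with the $\delta/n^2$ union bound over all $n^2$ pairs and all times, and the deterministic correctness of each threshold-bandit while-loop conditioned on $\mathcal{E}$---match the paper's argument essentially verbatim. The divergence comes in your last two paragraphs. The paper does \emph{not} re-verify the three cover-tree invariants: it argues that, on $\mathcal{E}$, \texttt{Noisy-Remove} computes exactly the same sets and makes exactly the same branching decisions as the noiseless \texttt{Remove} of \cite{beygelzimer2006cover}, and then appeals directly to Theorem~4 of that paper to conclude the output is a valid cover tree on $\X\setminus\{p\}$. Your approach of re-deriving nesting, covering, and separation from first principles is legitimate in spirit, but it is both more work than needed and, as written, leaves a gap in the separation step.

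The gap is in how you handle $j \in C_{i'} \setminus Q_{i'}$. You claim $j$ descends from an ancestor pruned at some higher level $i'' > i'$ \emph{where $d(q, \cdot) > 2^{i''}$ had already been certified}. That is not what the recursion certifies. The working cover sets $Q_{i'}$ are built during the top-down descent in search of $p$, so the pruning certificate at level $i''$ is a bound on $d(p, a) > 2^{i''}$ for the pruned ancestor $a$, not a bound on $d(q, a)$. To go from a certificate about $p$ to the needed conclusion $d(q, j) > 2^{i'}$ you must chain through $d(p, q) \leq 2^{i}$ (because $q$ is a child of $p$), the pruning bound on $d(p, a)$, and the covering-tree chain from $a$ to $j$, and check that the resulting inequality actually closes. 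As sketched, the argument conflates whose distances were tested, so it does not yet establish separation. The paper sidesteps this entirely by black-boxing the invariant-preservation into Theorem~4 of \cite{beygelzimer2006cover}; if you want to keep your from-scratch route, you should either fill in the $p$-to-$q$ triangle-inequality chain explicitly or adopt the paper's reduction.
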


Next we bound the number of calls to the distance oracle necessary to remove a point $p$ from a cover tree $\T$. 

\begin{theorem}\label{thm:nns_removal_complex}
Fix $\delta > 0$, a cover tree $\T$ on set $\X$, and a point $p \in \X$ to remove. Let $|\X| = n$ and assume that the expansion rate of $\X$ is $c$. Run \texttt{Noisy-Remove} with failure probability $1-\delta$ and pass it the root level cover set: $C_{i_\text{top}}$. If \texttt{Noisy-Remove} succeeds, which occurs with probability $1-\delta$, then it returns a cover tree on $\X \backslash \{p\}$ in at most
\begin{align*}
	O\left(c^{11}\log^2(n)\log\left(\frac{n^2}{\delta}\right)\widehat{\kappa}_p\right)
\end{align*}
calls to the noisy distance oracle
where parameter $\widehat{\kappa}_p$ is defined in the proof and depends on $\X$ and $p$. 
\end{theorem}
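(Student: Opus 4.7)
The plan is to mirror the structure of the proof of Theorem~\ref{thm:nns_insert_complex} and augment it with a separate accounting for the re-parenting phase. Lemma~\ref{lem:nns_remove_acc} already supplies the $1-\delta$ correctness guarantee, so I would condition on the event (of probability at least $1-\delta$) that every thresholding bandit invoked by \texttt{Noisy-Remove} correctly separates points within distance $2^{i'}$ of its query from those farther away, and count oracle calls on that event only.

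I would then decompose the total cost into two phases. \emph{Phase 1 (descent to find $p$)}: the outer recursion in Algorithm~\ref{alg:nns_remove} walks down the tree level by level, at each level $i$ forming $Q = \bigcup_{j \in Q_i}\text{children}(j)$ and running a thresholding bandit against the value $2^i$. This is structurally identical to the insertion recursion, modulo that the confidence width is $C_{\delta/n^2}$ instead of $C_{\delta/n}$, and that samples are reused across recursive calls via the nesting invariant. Applying the same expansion-constant packing used in Theorems~\ref{thm:nns_query_time_bnd} and~\ref{thm:nns_insert_complex} bounds $|Q| = O(c^5)$ at each level, and the anytime-bandit analysis from \cite{jamieson2018bandit} contributes a factor of $\log(n^2/\delta)$ times a local inverse-gap term. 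Summing over the $O(\log n)$ recursive levels yields $O(c^7 \log n \log(n^2/\delta)\,\widehat{\kappa}_p)$ calls for this phase, matching the insertion bound up to the $n\mapsto n^2$ adjustment inside the logarithm.

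\emph{Phase 2 (re-parenting)}: at the unique level $i^\star$ at which $p$ actually resides, each of $p$'s children must be matched to a new parent. By the standard cover-tree bound on children (at most $c^4$), the outer \texttt{for}-loop iterates $O(c^4)$ times. For each child $q$, the \texttt{while}-loop at line 17 of Algorithm~\ref{alg:nns_remove} promotes $q$ one level at a time, at each promotion level running a thresholding bandit on $Q_{i'}$ against threshold $2^{i'}$. The number of promotion levels for any fixed $q$ is bounded by the tree height $O(\log n)$, and at each such level $|Q_{i'}| = O(c^5)$ by the same packing argument; samples are again reused across promotion levels for a fixed $q$ via nesting. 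Aggregating the per-child cost $O(c^7 \log n \log(n^2/\delta))$ over the $O(c^4)$ children gives $O(c^{11} \log^2 n \log(n^2/\delta)\,\widehat{\kappa}_p)$ oracle calls, which dominates Phase~1 and yields the stated bound. The $c^{11}$ factor decomposes naturally as $c^4$ (children) times $c^7$ (inherited insertion-style per-child cost), and the $\log^2 n$ as one $\log n$ from the descent height and a second $\log n$ from the per-child promotion height.

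The main obstacle I anticipate is the clean definition of $\widehat{\kappa}_p$: it must simultaneously absorb the per-pair inverse-gap terms appearing in (a) the Phase~1 bandits on pairs $(p,j)$, and (b) the Phase~2 bandits on pairs $(q,j)$ for $q \in \text{children}(p)$ and $j$ spanning multiple promotion-level cover sets. I would define $\widehat{\kappa}_p$ as the average over all such (point, level) instances of the quantity $\max\{1, \bigl(|d(\cdot,\cdot)-2^{i'}|\bigr)^{-2}\}$ that governs termination of the corresponding thresholding bandit, analogous to $\overline{\kappa}_p$ in Theorem~\ref{thm:nns_insert_complex}. Showing that this quantity is finite and well-defined requires a short triangle-inequality argument using the separation invariant to rule out degenerate gaps, after which the rest of the proof reduces to bookkeeping the union bound (which succeeds at level $\delta/n^2$ because at most $n^2$ distinct oracle pairs can ever be queried across all recursive calls) and collecting the polynomial-in-$c$ and polynomial-in-$\log n$ factors above.
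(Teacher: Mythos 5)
Your final bound matches the theorem, but the decomposition you use to reach it does not track the algorithm, and your own arithmetic as written actually falls a $\log n$ short. First, your ``Phase~1 (descent to find $p$)'' does not correspond to any oracle calls in \texttt{Noisy-Remove}: the recursive call on line~3 navigates the data structure before any thresholding bandit runs, and the only bandits in the pseudocode live inside the \texttt{If $p\in Q$} branch (lines~9--25), i.e.\ in re-parenting. There is no cover-set bandit on the way down, so your $O(c^7\log n\log(n^2/\delta)\,\widehat{\kappa}_p)$ descent term is spurious. Second, your claim that re-parenting happens ``at the unique level $i^\star$ at which $p$ actually resides'' is false: by nesting, once $p$ appears at some level it is present at all lower levels, and the \texttt{If $p\in Q$} branch fires on every recursive return where that holds. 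The paper's proof explicitly sums the per-level re-parenting cost over all such levels, which is where one of the two $\log n$ factors (and a $c^2$) comes from; the other $\log n$ (and another $c^2$) comes from the promotion \texttt{while}-loop inside each level.

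Because you collapsed the removal to a single level, your Phase~2 tally ($c^4$ children $\times$ $c^7\log n$ per child) gives $c^{11}\log n\log(n^2/\delta)$, not $c^{11}\log^2 n\log(n^2/\delta)$; the extra $\log n$ you attribute to ``descent height'' is additive with Phase~2 in your own decomposition, not multiplicative, so it cannot supply the missing factor. The correct multiplicative structure is: ($c^5$ cover-set size) $\times$ ($c^2\log n$ promotion levels) $\times$ ($c^4$ children) $\times$ ($c^2\log n$ levels at which $p$ must be removed), with the $\log(n^2/\delta)$ from the $\delta/n^2$ union bound you correctly identified. Your definition of $\widehat{\kappa}_p$ as an average of inverse-gap terms over the (child, promotion-level, competitor) instances is the right kind of object, though the paper nests three explicit averages (over removal level $i$, over children $q$, over promotion level $i'$) rather than a flat average; the difference is cosmetic. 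The appeal to the separation invariant to ``rule out degenerate gaps'' is not what the paper does and is unnecessary---$\widehat{\kappa}_p$ simply absorbs whatever the gaps are.
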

\begin{remark}
As in the statement of Theorem~\ref{thm:nns_query_time_bnd}, the term $\widehat{\kappa}_p$ captures the average effect of noise on this problem, and as the noise variance goes to $0$, this term becomes $1$. 
\end{remark}

\section{Proofs}
\subsection{Memory and accuracy proofs}

First, we prove Lemma~\ref{lem:nns_memory} which bounds the memory necessary to build, store, or search a \texttt{Bandit Cover Tree}

\begin{proof}[Proof of Lemma~\ref{lem:nns_memory}]
By the nesting invariance, once a point $p\in \X$ enters the cover tree at a level $i$, it is present in all lower levels. By the covering tree invariance, $p$ has a unique parent node. When computing estimates of distances, a single empirical estimate of $d(\bq, p)$ and confidence width is necessary to save for each point $p$. Hence, for each point $p \in \X$, at most $4$ numbers are necessary to store. Therefore, the \texttt{Bandit-Cover-Tree} can be stored in $O(n)$ memory.  

\end{proof}

Next we prove several Lemmas that guarantee that the \texttt{Noisy-Find-Nearest}, \texttt{Noisy-Insert}, and \texttt{Noisy-Remove} methods all succeed with high probability. We begin with search accuracy. 

\begin{proof}[Proof of Lemma~\ref{lem:nns_search_acc}]
\texttt{Noisy-Find-Nearest} is adapted from \texttt{Find-Nearest}, but uses \texttt{Identify-Cover} to identify individual cover sets at it descends the tree and \texttt{find-smallest-in-set} to return the smallest in the final set.
By Theorem 2 of \cite{beygelzimer2006cover}, \texttt{Find-Nearest} succeeds with probability 1. Hence, we wish to show that the probability of \texttt{Identify-Cover} or \texttt{find-smallest-in-set} making an error in any call is at most $\delta$. 

By Theorem~\ref{thm:identify_cover_complex}, \texttt{Identify-Cover} given a failure probability of $\delta/\alpha$ succeeds with probability $1-\delta/\alpha$. Though \texttt{Identify-Cover} may return a set that is slightly larger than $\{j \in Q: d(\bq, \bx_j) \leq d(\bq, Q) + 2^i\}$, the returned set always contains this set if \texttt{Identify-Cover} succeeds. This implies that no grandparent node of $\bx_q$ is ever removed. Hence, this does not affect the proof of correctness. 

By definition, $\alpha = \min\left\{\left\lceil \frac{\log(n)}{\log(1 + 1/c^2)} + 1 \right\rceil, n\right\} + 1$. By Lemma~\ref{lem:height_bound}, $i_\text{top} - i_\text{bottom} \leq \alpha - 1$. Hence, \texttt{Identify-Cover} is called at most $\alpha - 1$ times (once per level during traversal). \texttt{find-smallest-in-set} is run a single time with failure probability $\delta/\alpha$, and by Theorem~\ref{thm:successive_elim_complex} makes an error with probability at most $\delta/\alpha$.
A union bound implies that the probability of an error in any call is at most $\delta$, completing the proof. 
\end{proof}

Next, we show that \texttt{Noisy-Insert} succeeds with high probability. 

\begin{proof}[Proof of Lemma~\ref{lem:nns_insert_acc}]
To show correctness, we begin by verifying that the thresholding bandit routine in lines $2-9$ does not fail in any recursive call of \texttt{Noisy-Insert}. Define the event 
$${\mathcal E} := \bigcap_{k\in [n]}\bigcap_{t=1}^\infty \left\{ |\hd_{p,k}(t)  - d_{p,k}| \leq C_{\delta/n}(t)\right\}$$
By definition of $C_{\delta}(t)$, we have that $P({\mathcal E}^c) \leq \delta$ where we have used the assumption that $|\X|=n$ so there are only $n$ different points explicitly represented in $\T$. For the remainder of the proof, we assume that ${\mathcal E}$ occurs and show that it leads to correctness in the thresholding bandit routine. 

Let $\mu > 0$ denote any threshold. On ${\mathcal E}$, we have that 
$$\left\{ k: d_{p,k} < \mu   \right\} \supset \left\{ k: \hd_{p,k} + C_{\delta/n}(T_k(t)) < \mu   \right\} $$
for any set of values $\{k: T_k(t)\}$ at any time $t$. Similarly, we have that 
$$\left\{ k: d_{p,k} > \mu   \right\} \supset \left\{ k: \hd_{p,k} - C_{\delta/n}(T_k(t)) > \mu   \right\}.$$
Therefore, if the algorithm stops sampling when either 
$$ \hd_{p,k} + C_{\delta/n}(T_k(t)) < \mu$$
or 
$$\hd_{p,k} - C_{\delta/n}(T_k(t)) > \mu $$
for every point $k$ and any value of $\mu$, then on event ${\mathcal E}$, we have identified the sets $\left\{ k: d_{p,k} < \mu   \right\}$ and $\left\{ k: d_{p,k} > \mu   \right\} $ correctly. In particular, the above superset relation holds with equality. 

Applying this to $\mu = 2^i$ for different values of $i$ in the algorithm we see that at all times the thresholding bandit implemented in lines $2-9$ succeeds. Since these bounds are shared between recursive calls of the algorithm, the routine succeeds in every recursive call.

We conclude by showing that if these sets have been computed correctly, which occurs when ${\mathcal E}$ occurs, then the algorithm correctly computes all quantities needed for the \texttt{Insert} algorithm in \cite{beygelzimer2006cover}.  

First, note that after the thresholding bandit terminates
$$\{\text{Threshold bandit has terminated} \} \cap \{j\in Q: \hd_{p,j}(T_j) + C_{\delta/n}(T_j) \leq 2^{i}\} = \emptyset \ \implies \ d(p,Q) > 2^i.$$

Next, note that when the thresholding bandit routine terminates at line $8$, we have that 
$$\left\{ k \in Q: \hd_{p,k} + C_{\delta/n}(T_k(t)) \leq 2^i  \right\} = \left\{ k \in Q: d_{p,k} \leq 2^i  \right\} = Q_{i-1}$$
where the last equality holds by definition in the \texttt{Insert} algorithm. 

Finally, by the nesting invariance of cover trees and the definition of children in a tree, we have that $Q_i \subset Q$. Hence $Q_i \cap Q_{i-1} \neq \emptyset$ is equivalent to the condition that $d(p, Q_i) \leq 2^i$. 
This implies that we have computed (under uncertainty) the same quantities needed in for the \texttt{Insert} algorithm of \cite{beygelzimer2006cover} for the NNS problem. Applying Theorem 3 therein completes the proof.
\end{proof}

Finally, we show that \texttt{Noisy-Remove} succeeds with high probability. 

\begin{proof}[Proof of Lemma~\ref{lem:nns_remove_acc}]
Similar to the proof for \texttt{Noisy-Insert}, we begin by showing correctness of the thresholding bandit in lines $9-16$. Again, we must verify that it does not fail in any recursive call of \texttt{Noisy-Remove}. Define the event 
$${\mathcal E} := \bigcap_{j,k\in [n]}\bigcap_{t=1}^\infty \left\{ |\hd_{j,k}(t)  - d_{j,k}| \leq C_{\delta/n^2}(t)\right\}$$
By definition of $C_{\delta}(t)$, we have that $P({\mathcal E}^c) \leq \delta$ where we have used the assumption that $|\X|=n$ so there are only $n$ different points explicitly represented in $\T$ and there are at most ${n \choose 2} $ pairs of distances. Note that especially for higher levels $i$ of the tree it is possible that $|Q_i| < n$. Hence a weaker union bound is possible. As $|Q_i|$ is unknown a priori, we take the naive bound that $|Q_i|\leq n$ for any round $i$, though it is possible to instead alter the union bound in different recursive calls to \texttt{Noisy-Remove}. 

For the remainder of the proof, we assume that ${\mathcal E}$ occurs and show that it leads to correctness in the thresholding bandit routine. 
Let $\mu > 0$ denote any threshold. On ${\mathcal E}$, we have that for any point $q$ (in particular any child of node $p$)
$$\left\{ k: d_{q,k} < \mu   \right\} \supset \left\{ k: \hd_{q,k}(T_{q,k}(t)) + C_{\delta/n^2}(T_{q,k}(t)) < \mu   \right\} $$
for any set of values $\{k: T_{q,k}(t)\}$ at any time $t$. Similarly, we have that 
$$\left\{ k: d_{p,k} > \mu   \right\} \supset \left\{ k: \hd_{q,k}(T_{q,k}(t)) - C_{\delta/n^2}(T_{q,k}(t)) > \mu   \right\}.$$
Therefore, if the algorithm stops sampling when either 
$$ \hd_{q,k}(T_{q,k}(t)) + C_{\delta/n^2}(T_{q,k}(t)) < \mu$$
or 
$$\hd_{q,k}(T_{q,k}(t)) - C_{\delta/n^2}(T_{q,k}(t)) > \mu $$
for every point $k$ and any value of $\mu$, then on event ${\mathcal E}$, we have identified the sets $\left\{ k: d_{p,k} < \mu   \right\}$ and $\left\{ k: d_{p,k} > \mu   \right\} $ correctly. In particular, the above superset relation holds with equality. 

Applying this to $\mu = 2^{i'}$ for different values of $i'$ in the algorithm we see that at all times, the thresholding bandit implemented in lines $9-16$ succeeds. Since these bounds are shared between recursive calls of the algorithm, the routine succeeds in every recursive call.

We conclude by showing that if these sets have been computed correctly, which occurs when ${\mathcal E}$ occurs, then the algorithm correctly computes all quantities needed for the \texttt{Remove} algorithm in \cite{beygelzimer2006cover}.  

First, note that when the thresholding bandit routine terminates
$$\{\text{Thresholding bandit terminated}\} \cap \{j\in Q: \hd_{q,j}(T_j) + C_{\delta/n^2}(T_{q,j}) \leq 2^{i'}\} = \emptyset \ \implies \ d(q,Q) > 2^{i'}.$$

Second, note that if the set 
$$ \{j \in Q_{i'}: \hd_{q,j}(T_{q,j}) + C_{\delta/n^2}(T_j) \leq 2^{i'} \}$$
is nonempty, then for any $q'$ contained in it, we have that $d_{q, q'} \leq 2^{i'}$. 
Therefore, on ${\mathcal E}$, \texttt{Noisy-Remove} computes the same quantities as \texttt{Remove}. 
Applying Theorem 4 of \cite{beygelzimer2006cover} completes the proof.
\end{proof}

\subsection{Query Time Complexity}


Now we turn our attention to the number of calls to the distance oracle needed by \texttt{Bandit Cover Tree}. We begin by proving a bound on the number of oracle calls made by \texttt{Noisy-Find-Nearest}. 

\begin{proof}[Proof of Theorem~\ref{thm:nns_query_time_bnd}]
Assume that \texttt{Noisy-Find-Nearest} succeeds, which occurs with probability $1-\delta$. Let $i_\text{top}$ and $i_\text{bottom}$ represent the top and bottom of $\T$. We begin by bounding the number of oracle calls drawn in an arbitrary round. Calls to the oracle only occur within the \texttt{Identify-Cover} routine. Suppose \texttt{Identify-Cover} is called on set $Q$ and run with failure probability $\delta/n$ since $\delta/\alpha \geq \delta/n$ deterministically. 
By Theorem~\ref{thm:identify_cover_complex}
the number of calls to the distance oracle made by \texttt{Identify-Cover} is bounded by
\begin{align}\label{eq:identify_cover_complex}
c_1\log\left(\frac{n}{\delta}\right)\sum_{j\in Q} \min\left\{\max \left\{\frac{1}{(d_{\min} + 2^{i-1} - d_{q, j})^2}, \frac{1}{(d_{q,j} + \kappa_i - d_{\min})^2} \right\},  2^{-2i}\right\}
\end{align}
where $c_1$ includes constants and doubly logarithmic terms, $d_{\min} = \min_{j\in Q} d_{q,j}$, and $\kappa_i = \min |d_{\min} + 2^{i-1} - d_{q,j}|$. 

Define $\kappa_i^\text{avg}$ to be the arithmetic means of the summands in the above sum in Equation~\ref{eq:identify_cover_complex}. Recall that for the cover set $Q_i$ at level $i$, $Q$ is defined as $Q = \bigcup_{p \in Q_i}\text{children}(p)$ in the \texttt{Noisy-Find-Nearest} algorithm. Hence, we may compactly bound Equation~\ref{eq:identify_cover_complex} as 
$$c_1\log\left(\frac{n}{\delta}\right) \left|\bigcup_{p \in Q_i}\text{children}(p)\right| \kappa_i^\text{avg}$$
since we have replaces a $\min$ by one of the terms inside. 

Applying this, we may sum over all levels of the tree $\T$ and bound the total number of oracle calls as
\begin{align*}
c_1\log\left(\frac{n}{\delta}\right) \sum_{i = i_\text{top}}^{i_\text{bottom}} \left|\bigcup_{p \in Q_i}\text{children}(p)\right| \kappa_i^\text{avg}
\end{align*}
where we have written the outer sum index in order of \emph{descending} $i$ since $i_\text{top} > i_\text{bottom}$. This is done to reflect to the process of descending tree $\T$ and counting the number of oracle calls taken at each level. 
We proceed by bounding 
$ \left|\bigcup_{p \in Q_i}\text{children}(p)\right|.$

For the set $Q$ defined in the algorithm, define the set $$\Tilde{Q}_i : = \{j\in Q : d(\bq, \bx_j) \leq d(\bq, Q) + 2^{i}\}.$$
By design, \texttt{Identify-Cover} returns a set $Q_i$ such that 
$$\Tilde{Q_i} = \{j\in Q : d(\bq, \bx_j) \leq d(\bq, Q) + 2^{i}\} \subset Q_i \subset \{j\in Q : d(\bq, \bx_j) \leq d(\bq, Q) + 2^{i+1}\} = \Tilde{Q}_{i+1}. $$
By the expansion rate, $|\Tilde{Q}_{i+1}| \leq c|\Tilde{Q}_{i}|$ which implies that $|Q_{i}| \leq c|\Tilde{Q}_i|$. By Lemma~\ref{lem:num_children_bound}, the number of children of any node is bounded by $c^4$. Therefore, we have that 
\begin{align*}
    \left|\bigcup_{p \in Q_i}\text{children}(p)\right|& \leq \left|\bigcup_{p \in \Tilde{Q}_{i-1}}\text{children}(p)\right|\\ 
    &= \left|\bigcup_{p \in \Tilde{Q}_{i+1}\backslash \Tilde{Q}_i}\text{children}(p) \cup \bigcup_{p \in \Tilde{Q}_{i}}\text{children}(p) \right| \\
    & \stackrel{a}{\leq} c^4|\Tilde{Q}_{i+1}\backslash \Tilde{Q}_i|\left|\bigcup_{p \in \Tilde{Q}_{i}}\text{children}(p) \right|\\
    & \stackrel{b}{\leq}c^4(c-1)| \Tilde{Q}_i|\left|\bigcup_{p \in \Tilde{Q}_{i}}\text{children}(p) \right|\\
    & \stackrel{c}{\leq}c^5\left|\bigcup_{p \in \Tilde{Q}_{i}}\text{children}(p) \right|^2
\end{align*}
Inequality $a$ follows by the bound on the number of children of any node. Inequality $b$ is a consequence of the expansion rate on $\X\cup\{\bq\}$. To see this observe that we have that $|\Tilde{Q}_{i+1}| \leq c|\Tilde{Q}_i|$ and $|\Tilde{Q}_{i+1}| = |\Tilde{Q}_{i}| + |\Tilde{Q}_{i+1}\backslash\Tilde{Q}_i|$. Hence $|\Tilde{Q}_{i+1}\backslash\Tilde{Q}_i|\leq (c-1)|\Tilde{Q}_i|$. Inequality $c$ is a consequence of the nesting invariance which implies that $p \in \text{children}(p)$ for any node $p$ in the cover tree. 

Let $Q^\ast$ be the final explicit $Q$. That is $Q^\ast = \Tilde{Q}_{i^\ast}$ where $i^\ast$ is the lowest level such that an explicit node exists in the cover set $Q_{i^\ast}$. For all levels $i$ such that $i > i^\ast$ (levels above $i^\ast$), explicit nodes have yet to be added. Hence 
$$\left|\bigcup_{p \in \Tilde{Q}_i}\text{children}(p)\right| \leq \left|\bigcup_{p \in \Tilde{Q}_{i^\ast}}\text{children}(p)\right|. $$
For all levels below $i^\ast$, by definition, no new nodes are added as all remaining nodes are implicit. Therefore, for $i < i^\ast$ (lower levels of the tree), 
$$\left|\bigcup_{p \in \Tilde{Q}_i}\text{children}(p)\right| \leq \left|\bigcup_{p \in \Tilde{Q}_{i^\ast}}\text{children}(p)\right|. $$
In particular, $Q_{i^\ast}$ maximizes $\left|\bigcup_{p \in \Tilde{Q}_i}\text{children}(p)\right|$. By Lemma~\ref{lem:size_of_largest_set}, we have that $\left|\bigcup_{p \in \Tilde{Q}_{i^\ast}}\text{children}(p)\right| \leq c^5$. 
Next, for clarity, define $$\overline{\kappa} := \frac{1}{i_\text{top} - i_\text{bottom}}\sum_{i=i_\text{top}}^{i_\text{bottom}}\kappa_i^\text{avg},$$ the average of the $\kappa_i^\text{avg}$ terms that appears in each level. Plugging in both above pieces, we have that
\begin{align*}
c_1\log\left(\frac{n}{\delta}\right)  \sum_{i = i_\text{top}}^{i_\text{bottom}} \left|\bigcup_{p \in Q_i}\text{children}(p)\right| \kappa_i^\text{avg} 
\leq c_1c^5\log\left(\frac{n}{\delta}\right)  \sum_{i = i_\text{top}}^{i_\text{bottom}} \left|\bigcup_{p \in \Tilde{Q}_{i^\ast}}\text{children}(p)\right|^2 \kappa_i^\text{avg}
\leq c_1c^{15}\log\left(\frac{n}{\delta}\right) (i_\text{top} - i_\text{bottom})\overline{\kappa}
\end{align*}

By Lemma~\ref{lem:height_bound}, we have that $i_\text{top} - i_\text{bottom} = O(c^2\log(n))$. Plugging this in we have that the total number of oracle taken by \texttt{Identify-Cover} calls is bounded by 
\begin{align*}
	O\left( c^{17}\log(n)\log\left(\frac{n}{\delta}\right) \overline{\kappa} \right).
\end{align*}
It remains to bound the number or oracle calls drawn by \texttt{find-smallest-in-set} to find $\bx_q$ in the set $Q_{i_\text{bottom}}$. One the high probability event that all calls to \texttt{Identify-Cover} succeed, which occurs with probability at least $1-\delta$, Lemma~\ref{lem:nns_search_acc} implies that $\bx_q\in Q_{i_\text{bottom}}$. Hence, $d(\bq, Q_{i_\text{bottom}} = d(\bq, \bx_q)$. 
By Theorem~\ref{thm:successive_elim_complex}, the total number of samples drawn by \texttt{find-smallest-in-set} is bounded by 
\begin{align*}
    c_2\log\left(\frac{n}{\delta}\right)\sum_{j\in Q_{i_\text{bottom}}\backslash\{\bx_q\}}(d(\bq, \bx_q) - d(\bq, \bx_j))^{-2} = c_2\log\left(\frac{n}{\delta}\right)|Q_{i_\text{bottom}}|\kappa'\leq c_2\log\left(\frac{n}{\delta}\right)c^6\kappa'
\end{align*}
where $\kappa'$ is the empirical average of $(d(\bq, \bx_q) - d(\bq, \bx_j))^{-2}$ for all $j \in \backslash\{\bx_q\}$ and we have used the fact that $|Q_{i_\text{bottom}}|\leq c|\Tilde{Q}_{i^\ast}|\leq c^6$.
Therefore, the total complexity of \texttt{Noisy-Find-Nearest} is 
\begin{align*}
    O\left( c^{17}\log(n)\log\left(\frac{n}{\delta}\right) \overline{\kappa} + \log\left(\frac{n}{\delta}\right)c^6\kappa' \right). = O\left( c^{17}\log(n)\log\left(\frac{n}{\delta}\right) \kappa\right)
\end{align*}
where $\kappa = \max(\overline{\kappa}, \kappa')$. To complete the proof, note that $\kappa' \leq ((d(\bq, \bx_q) - d(\bq, \bx_{q'}))^{-2}$ where $\bx_{q'}$ is the second closest point to $\bq$. To bound $\overline{\kappa}$, note that
each summand in Equation~\eqref{eq:identify_cover_complex} may be bounded instead as $2^{-2i}$ in level $i$. As $\kappa_i^\text{avg}$ is an average of these terms, it too is bounded by $2^{-2i}$. $\overline{\kappa}$ is the average of $\kappa_i^\text{avg}$ in each round. As $2^{-2i}$ monotonically decreases in $i$, we have that $\overline{\kappa} \leq \max_{i}\kappa_i^\text{avg}\leq 2^{-2i_{\text{bottom}}}$.  
Hence, by Lemma~\ref{lem:krauthgamer_height}, the bottom level is $i = O(\log_2(d_{\min}))$ where $d_{\min} :=\min_{j,k\in \X}d(\bx_j, \bx_k)$. 
Hence We have that $2^{-2i_{\text{bottom}}}$ is $O(d_{\min}^{-2})$. Therefore, $\kappa \leq O\left(\max\{[d(\bq, \bx_q) - d(\bq, \bx_{q'})]^{-2}, d_{\min}^{-2}\}\right)$. 
\end{proof}

\begin{proof}[Proof of Corollary~\ref{cor:nns_query_time_bnd}]
The proof follows identically, except
that we make use $\Tilde{c}$ as a bound on $c$ throughout and may plug in 
	$$\alpha = \left\lceil \frac{\log(n)}{\log(1+1/\Tilde{c}^2)} + 1 \right\rceil + 1 = O(\Tilde{c}^2\log(n)). $$
	Hence the term from the union bound becomes $O\left(\log\left(\frac{\Tilde{c}^2\log(n)}{\delta}\right)\right)$ instead of $\log\left(\frac{n}{\delta}\right)$. 
\end{proof}

\subsection{Insertion Time Complexity}

Next, we bound the number of oracle calls made by \texttt{Noisy-Insert}. 

\begin{proof}[Proof of Theorem~\ref{thm:nns_insert_complex}]
We begin by analyzing the complexity of the thresholding bandit subroutine in lines $2-9$ of \texttt{Noisy-Insert}. Assume the same event ${\mathcal E}$ from the proof of Lemma~\ref{lem:nns_insert_acc} that holds with probability $1-\delta$. We proceed by bounding the number of rounds any point $j$ in the set $Q$ may be $i^\ast(t)$ before it must enter the set $K$. Summing up the complexity for all points in $Q$ bounds the complexity of this routine. 

Suppose we wish to insert $p$ into level $i$. Assume for point $j \in C_i$ that $d_{p,j} \leq 2^i $. Assume that 
$$T_j \geq \frac{c'}{(d_{p,j} - 2^i)^2}\log\left( \frac{n}{\delta}\log\left(  \frac{n}{\delta(d_{p,j} - 2^i)^2}     \right)   \right)$$ 
for a sufficiently large constant $c'$. Then
\begin{align*}
\hd_{p,j}(T_j) + C_{\delta/n}(T_j) \stackrel{{\mathcal E}}{\leq} d_{p,j} + 2C_{\delta/n}(T_j) \stackrel{(a)}{<} d_{p,j} + 2^i - d_{p,j} = 2^i,
\end{align*}
implying that $j$ must be in the set $K$. Inequality $(a)$ follows from Lemma~\ref{lem:inv_conf_bounds}. This argument may be repeated for $j \in C_i$ such that $d_{p,j} > 2^i $ by instead considering the lower confidence bound on $\hd_{p,j}(T_j)$. 

Summing over all $j$ in the set $Q$, no more than
\begin{align*}
\sum_{j\in Q} \frac{c'}{(d_{p,j} - 2^i)^2}\log\left( \frac{n}{\delta}\log\left(  \frac{n}{\delta(d_{p,j} - 2^i)^2}     \right)   \right)
\leq c_1 |Q| \log\left(\frac{n}{\delta}\right)\kappa_i^\text{avg}
\end{align*}
calls to the oracle are made between lines 2 and 8 for a problem independent constant $c_1$.  Similar to the proof of Theorem~\ref{thm:nns_query_time_bnd}, we define $\kappa_i^\text{avg}$ to be average of the summands including doubly logarithmic terms for brevity and clarity. 

By definition, in level $i$ $Q = \bigcup_{j \in Q_i}\text{children}(j)$. In the worst case, $p$ is added at the leaf level, $i_\text{bottom}$. \texttt{Noisy-Insert} descends down the tree via recursive calls. This happens at most $i_\text{top} - t_\text{bottom}$ times. Summing over all levels, the total number of calls to the oracle is bounded by 
\begin{align*}
	c_1\log\left(\frac{n}{\delta}\right)\sum_{i = i_\text{top}}^{i_\text{bottom} } \left| \bigcup_{j \in Q_i}\text{children}(j) \right|  \kappa_i^\text{avg}
\end{align*}
where the sum is indexed from the largest $i$ to the smallest to reflect descending the tree. As in the proof of Theorem~\ref{thm:nns_query_time_bnd}, there exists a level $i^\ast$ which maximizes $\left| \bigcup_{j \in Q_i}\text{children}(j) \right|$ and we have that $\left| \bigcup_{j \in Q_{i^\ast}}\text{children}(j) \right| \leq c^5$. Define 
$$\overline{\kappa}_p = \frac{1}{i_\text{top} - i_\text{bottom}}\sum_{i=i_\text{top}}^{i_\text{bottom}}\kappa_i^\text{avg}$$
Plugging this in,
\begin{align*}
c_1\log\left(\frac{n}{\delta}\right)\sum_{i = i_\text{top}}^{i_\text{bottom} } \left| \bigcup_{j \in Q_i}\text{children}(j) \right| \kappa_i^\text{avg} & \leq c_1c^5 \log\left(\frac{n}{\delta}\right)(i_\text{top} - i_\text{bottom}) \overline{\kappa_p} \\ & \leq O\left(c^7\log(n)\log\left(\frac{n}{\delta}\right)\overline{\kappa_p} \right). 
\end{align*}
where the final inequality follows from Lemma~\ref{lem:height_bound}. 
\end{proof}

\begin{proof}[Proof of Theorem~\ref{thm:nns_full_build_time}]
Begin with an empty tree. \texttt{Noisy-Insert} can trivially place $\bx_1$ as a root and replace the root as necessary. We run \texttt{Noisy-Insert} with failure probability $\delta/n$. Since placing the first node at the root is trivial, make the inductive hypothesis that we have a correct tree $\T$ built on a strict subset $\S \subsetneq \X$ and wish to insert a point $p \in \X\backslash \S$ to $\T$ using \texttt{Noisy-Insert}. By Lemma~\ref{lem:nns_insert_acc}, this process succeeds with probability $1-\delta/n$. By Theorem~\ref{thm:nns_insert_complex}, this requires no more than 
\begin{align*}
	O\left( c^7\log(n)\log\left(\frac{n}{\delta}\right) \overline{\kappa_p} \right)
\end{align*}
calls to the noisy distance oracle. 

A union bound over inserting the $n$ points in $\X$ implies correctness. Summing the above expression for every point $p \in \X$ bounds the total sample complexity necessary for construction, stated in the Theorem. In particular, $\widetilde{\kappa}$ is the arithmetic mean of the individual $\overline{\kappa_p}$s. 
\end{proof}

\subsection{Removal Time Complexity}

Finally, we bound the number of oracle calls needed by \texttt{Noisy-Remove}

\begin{proof}[Proof of Theorem~\ref{thm:nns_removal_complex}]
We begin by analyzing the complexity of the Thresholding bandit subroutine in lines $9-16$ of \texttt{Noisy-Remove}. Assume the same event ${\mathcal E}$ from the proof of Lemma~\ref{lem:nns_remove_acc} that holds with probability $1-\delta$. Fix an arbitrary $q \in \text{children}(p)$. We proceed by bounding the number of rounds any point $j$ in the set $Q_{i'}$ may be $i^\ast(t)$ before it must enter the set $K$. Summing up the complexity for all points in $Q$ bounds the complexity of this routine. 

Assume for point $j \in C_{i'}$ that $d_{q,j} \leq 2^{i'} $. Assume that 
$$T_{q,j} \geq \frac{c'}{(d_{q,j} - 2^{i'})^2}\log\left( \frac{n^2}{\delta}\log\left(  \frac{n^2}{\delta(d_{q,j} - 2^{i'})^2}     \right)   \right)$$ 
for a sufficiently large constant $c'$. Then
\begin{align*}
\hd_{q,j}(T_{q,j}) + C_{\delta/n^2}(T_{q,j}) \stackrel{{\mathcal E}}{\leq} d_{q,j} + 2C_{\delta/n^2}(T_{q,j}) \stackrel{(a)}{<} d_{q,j} + 2^{i'} - d_{p,j} = 2^{i'},
\end{align*}
implying that $j$ must be in the set $K$. Inequality $(a)$ follows from Lemma~\ref{lem:inv_conf_bounds}. This argument may be repeated for $j \in C_i$ such that $d_{q,j} > 2^{i'} $ by instead considering the lower confidence bound on $\hd_{q,j}(T_{q,j})$. 

Summing over all $j$ in the set $Q_{i'}$, no more than
\begin{align*}
\sum_{j\in Q_{i'}} \frac{c'}{(d_{q,j} - 2^{i'})^2}\log\left( \frac{n^2}{\delta}\log\left(  \frac{n^2}{\delta(d_{q,j} - 2^{i'})^2}     \right)   \right)
\leq c_1 |Q_{i'}| \log\left(\frac{n^2}{\delta}\right)\kappa_{q,i'}^\text{avg}
\end{align*}
calls to the oracle are made between lines 9 and 16 for a problem independent constant $c_1$. Similar to the proof of Theorem~\ref{thm:nns_query_time_bnd}, we define $\kappa_{q,i'}^\text{avg}$ to be average of the summands including doubly logarithmic terms for brevity and clarity. 

If $$ \{j \in Q_{i'}: \hd_{q,j}(T_{q,j}) + C_{\delta/n^2}(T_{q,j}) \leq 2^{i'} \} = \emptyset,  $$
$i'$ is reset to $i'+1$ and the algorithm proceeds to the next level up the tree. An identical computation is performed as in lines $9-16$ and a similar bound applies as the above. The only difference is that since the Thresholding bandit is now comparing to a threshold of $2^{i' +1}$ (or alternatively $2^{i'}$ for the incremented value of $i'$) we incur a dependence on $\kappa_{q,i'+1}^\text{avg}$ instead. In particular, the number of oracle queries drawn between lines 20 and 25 is at most 
\begin{align*}
\sum_{j\in Q_{i'+1}} \frac{c'}{(d_{q,j} - 2^{i'+1})^2}\log\left( \frac{n^2}{\delta}\log\left(  \frac{n^2}{\delta(d_{q,j} - 2^{i'+1})^2}     \right)   \right)
\leq c_1 |Q_{i'+1}| \log\left(\frac{n^2}{\delta}\right)\kappa_{q,i'+1}^\text{avg}. 
\end{align*}
This process repeats until the conditional in the while loop is no longer satisfied. Naively, this happens at most $i_\text{top} - i + 1$ times as $i'$ is initialized as $i-1$ and is incremented until potentially it reaches the top level of $\T$, $i_\text{top}$. 
Summing this quantity over all levels of the tree, we may bound the total number of oracle calls as
\begin{align*}
\sum_{i' = i_\text{top}}^{i - 1} c_1|Q_{i'}| \log\left(\frac{n^2}{\delta}\right)\kappa_{q,i'}^\text{avg}. 
\end{align*}
As in the proof of Theorem~\ref{thm:nns_query_time_bnd}, there exists a level $i^\ast$ which maximizes $\left| \bigcup_{j \in Q_i}\text{children}(j) \right|$ and we have that $\left| \bigcup_{j \in Q_{i^\ast}}\text{children}(j) \right| \leq c^5$. As 
$Q_i \subset \bigcup_{j \in Q_i}\text{children}(j) $
by the nesting invariance, $c^5$ likewise bounds $Q_i$ for all $i$. 
Define 
$$\kappa_q^\text{avg}(i) = \frac{1}{i_\text{top} - i + 1}\sum_{i=i_\text{top}}^{i - 1}\kappa_{q,i}^\text{avg}$$
Plugging this in, we may bound the total number of oracle calls (for the distance of any point to $q$) by
\begin{align*}
\sum_{i' = i_{\text{top}}}^{i - 1} c_1|Q_{i'}| \log\left(\frac{n^2}{\delta}\right)\kappa_{q,i'}^\text{avg} & \leq c_1c^5 (i_\text{top} - i + 1)\log\left(\frac{n^2}{\delta}\right)\kappa_{q}^\text{avg}(i) \\
&\leq c_1 c^5 (i_\text{top} - i_\text{bottom})\log\left(\frac{n^2}{\delta}\right)\kappa_{q}^\text{avg}(i). 
\end{align*}
Next, by Lemma~\ref{lem:height_bound}, $i_\text{top} - i_\text{bottom}  = O(c^2\log(n))$. Therefore, 
\begin{align*}
c_1 c^5 (i_\text{top} - i_\text{bottom})\log\left(\frac{n^2}{\delta}\right)\kappa_{q}^\text{avg}(i) \leq O\left(c^7 \log(n)\log\left(\frac{n^2}{\delta}\right)\kappa_{q}^\text{avg}(i)\right)
\end{align*}
The above bounds the number of calls to the distance oracle needed for any child $q$ of $p$, the point to be removed. Due to the `For' loop in line 6, this process is repeated for all $q \in \text{children}(p)$. By Lemma~\ref{lem:num_children_bound}, the number of children of any node $p \in \T$ is at most $c^4$. 
Define 
$$\kappa^p(i) := \frac{1}{|\text{children}(p)|}\sum_{q \in \text{children}(p)} \kappa_q^\text{avg}(i)$$
where the superscript $p$ and the parenthetical $i$ denote that this quantity depends on all children of $p$ and level $i$ of the tree. 
Then 
\begin{align*}
\sum_{q \in \text{children}(p)} \kappa_q^\text{avg}(i) & =|\text{children}(p)| \frac{1}{|\text{children}(p)|}\sum_{q \in \text{children}(p)} \kappa_q^\text{avg}(i) \\
& \leq c^4 \frac{1}{|\text{children}(p)|}\sum_{q \in \text{children}(p)} \kappa_q^\text{avg}(i)\\
&= c^4 \kappa^p(i)
\end{align*}
Therefore, summing over all $q \in \text{children}(p)$, we can bound the total number of calls to the distance oracle drawn in lines 6 to 29 of \texttt{Noisy-Remove} by
\begin{align*}
O\left(c^9 \log(n)\log\left(\frac{n^2}{\delta}\right)\kappa^p(i)\right).
\end{align*}

As \texttt{Noisy-Remove} is recursive, it remains to sum the complexity of all recursive calls. The above bound depends on the level $i$ on which \texttt{Noisy-Remove} is called only through the term $\kappa^p(i)$. In the worst case, $p$ is present in every level and the `If' condition in line 4 is true for every recursive call. Hence, we sum the above expression over every level $i$. Define 
$$\widehat{\kappa}_p := \frac{1}{i_\text{top} - i_\text{bottom}}\sum_{i = i_\text{top}}^{i_\text{bottom}} \kappa^p(i).$$
The total number of oracle calls is bounded as
\begin{align*}
\sum_{i = i_\text{top}}^{\text{bottom}} O\left(c^9 \log(n)\log\left(\frac{n^2}{\delta}\right)\kappa^p(i)\right) 
& = O\left(c^9(i_\text{top} - i_\text{bottom}) \log(n)\log\left(\frac{n^2}{\delta}\right)\widehat{\kappa}_p\right) \\
& = O\left(c^{11}\log^2(n)\log\left(\frac{n^2}{\delta}\right)\widehat{\kappa}_p\right)
\end{align*}
completing the proof. 
\end{proof}

\section{External results used in proofs}

\subsection{Results about bandit subroutines}

\begin{theorem}[Based on Theorem 3.1 of \cite{mason2020finding}]\label{thm:identify_cover_complex} Fix $\delta < 1/2$, a query point $q$, a set of points $Q$, and $i\in \mathbb{Z}$. With probability at least $1-\delta$, \texttt{Identify-Cover} returns a set $Q_i$ such that 
$$\{j \in Q: d_{q,j} \leq d(\bq, Q) + 2^{i-1}\} \subset Q_i \subset \{j \in Q: d_{q,j} \leq d(\bq, Q) + 2^{i-1} + 2^{i-2}\}$$
in at most 
calls to the distance oracle made by \texttt{Identify-Cover} is bounded by
\begin{align}
c_1\log\left(\frac{n}{\delta}\right)\sum_{j\in Q} \min\left\{\max \left\{\frac{1}{(d(\bq, Q) + 2^{i-1} - d_{q, j})^2}, \frac{1}{(d_{q,j} + \kappa_i - d(\bq, Q))^2} \right\},  2^{-2i}\right\}
\end{align}
where $c_1$ includes constants and doubly logarithmic terms, and $\kappa_i = \min |d_{\min} + 2^{i-1} - d_{q,j}|$. 
\end{theorem}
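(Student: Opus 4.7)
The plan is to reduce \texttt{Identify-Cover} to the all-$\epsilon$-good arms identification problem and invoke Theorem 3.1 of \cite{mason2020finding}. Each $j \in Q$ is treated as an arm with unknown mean $d_{q,j}$ observed through the noisy oracle $\mathsf{Q}(q,j)$; identifying the desired set corresponds to returning every arm within $2^{i-1}$ of the smallest mean $d(\bq, Q)$ while tolerating at most a $2^{i-2}$ slack, up to the $i \to i{-}1$ reindexing that relates the algorithm's input parameter to the theorem statement.

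The first step is to set up a single high-probability event. Define
\[
\mathcal{E} := \bigcap_{j \in Q}\bigcap_{t \geq 1} \left\{ |\hat{d}_{q,j}(t) - d_{q,j}| \leq C_{\delta/|Q|}(t) \right\}.
\]
By the definition of $C_\delta(\cdot)$ and a union bound over $|Q|$ arms, $\mathbb{P}(\mathcal{E}) \geq 1 - \delta$, and the rest of the argument conditions on $\mathcal{E}$. On $\mathcal{E}$ the thresholds $U_t, L_t$ sandwich the population quantities $d(\bq, Q) + 2^i$ and $d(\bq, Q) + 2^i + 2^{i-1}$ up to $\pm 2 C_{\delta/|Q|}(T_{k^\ast}(t))$ for $k^\ast \in \arg\min_k d_{q,k}$. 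Consequently, any $j$ entering $K$ via $\hat{d}_{q,j} - C > U_t$ truly has $d_{q,j} > d(\bq,Q) + 2^i$ (so it is correctly excluded), while any $j$ entering $K$ via $\hat{d}_{q,j} + C < L_t$ truly has $d_{q,j} < d(\bq, Q) + 2^i + 2^{i-1}$ (so it is correctly included). This yields the claimed containment $\{j: d_{q,j} \leq d(\bq, Q) + 2^{i-1}\} \subset Q_i \subset \{j: d_{q,j} \leq d(\bq, Q) + 2^{i-1} + 2^{i-2}\}$.

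For the sample complexity I invert the anytime confidence width via Lemma~\ref{lem:inv_conf_bounds}. For each arm $j$, once $2\,C_{\delta/|Q|}(T_j)$ shrinks below either $|d_{q,j} - d(\bq, Q) - 2^{i-1}|$ or $|d_{q,j} - d(\bq, Q) - 2^{i-1} - 2^{i-2}|$, arm $j$ must enter $K$ on the corresponding side. Inverting $C_\delta(t) = \sqrt{4\log(\log_2(2t)/\delta)/t}$ yields a per-arm pull count of order
\[
\log(|Q|/\delta)\cdot \max\!\left\{(d(\bq, Q) + 2^{i-1} - d_{q,j})^{-2},\; (d_{q,j} - d(\bq, Q) - 2^{i-1} - 2^{i-2})^{-2}\right\}
\]
up to doubly logarithmic factors. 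Moreover, once $C_{\delta/|Q|}(T_j) < 2^{i-3}$ the width becomes smaller than the separation between $U_t$ and $L_t$, so every arm is classified; this caps the per-arm count by $O(2^{-2i}\log(\ldots))$. Taking the min of the two per-arm bounds and summing over $j \in Q$ gives the stated complexity, with constants and doubly logarithmic terms absorbed into $c_1$.

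The main obstacle will be reconciling the three-arm selection rule ($j_1, j_2, j^\ast$ pulled per iteration) with the one-pull-per-round analysis in \cite{mason2020finding}. I expect to argue that each of the three pulls only refines either the running estimates entering $U_t, L_t$ (through $j^\ast$) or the classification status of a boundary arm (through $j_1, j_2$), so that each arm $j$ is sampled at most the number of times dictated by the inversion above, up to a factor of three. A secondary subtlety is that $\min_k \hat{d}_{q,k}(T_k) \pm C$ is a nonlinear function of multiple arms' estimates; sandwiching $U_t$ and $L_t$ around their population counterparts requires applying the inversion lemma specifically to the true minimizer $k^\ast$, whose pull count is itself controlled by the $j^\ast$ branch of the selection rule.
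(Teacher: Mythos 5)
Your opening sentence states the correct high-level plan — identify \texttt{Identify-Cover} with all-$\epsilon$-good arm identification and invoke Theorem~3.1 of \cite{mason2020finding} — and this is exactly what the paper does. The paper's proof is a two-line reduction: it observes that \texttt{Identify-Cover} \emph{is} the $($ST$)^2$ algorithm from \cite{mason2020finding} run with parameters $\epsilon = 2^{i-1}$ and $\gamma = 2^{i-2}$ on the negated distances (so that distance-minimization becomes reward-maximization), and then Theorem~3.1 of that paper gives the correctness guarantee and the stated sample-complexity bound verbatim. Nothing more is re-proved.

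What you actually write after the first sentence departs from that plan, and this introduces a genuine gap. First, the per-arm pull count you derive by inverting the anytime confidence width has the form
$\max\{(d(\bq,Q)+2^{i-1}-d_{q,j})^{-2},\ (d_{q,j}-d(\bq,Q)-2^{i-1}-2^{i-2})^{-2}\}$,
i.e., the squared inverse gaps to the two fixed thresholds. The theorem instead states
$\max\{(d(\bq,Q)+2^{i-1}-d_{q,j})^{-2},\ (d_{q,j}+\kappa_i - d(\bq,Q))^{-2}\}$,
where the second term is the gap to the \emph{best arm} offset by $\kappa_i$. These are not the same quantity and one does not trivially dominate the other; the $\kappa_i$-based term reflects the LUCB-style elimination dynamics analyzed in \cite{mason2020finding} (in particular that arms can be eliminated via comparison to the running estimate of the minimum, not just to a fixed threshold). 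A fresh inversion argument that treats \texttt{Identify-Cover} as a pure thresholding bandit will not recover this form. Second, the ``main obstacle'' you flag at the end — reconciling the three-pull-per-round selection rule with a one-pull-per-round analysis — is not an obstacle at all: Theorem~3.1 of \cite{mason2020finding} analyzes precisely this three-arm ($j_1, j_2, j^\ast$) LUCB-style algorithm, so there is no mismatch to reconcile once you actually invoke the theorem rather than re-deriving its conclusion. In short, the right move is to carry out the reduction you announced in your first sentence and then stop; the self-contained inversion argument that follows would need to essentially reprove the (nontrivial) sample-complexity theorem of \cite{mason2020finding}, and as written it does not land on the correct bound.
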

\begin{proof}
This result follows by noticing that \texttt{Identify-Cover} is equivalent to \st2 with parameters $\epsilon = 2^{i-1}$ and $\gamma = 2^{i-2}$
from \cite{mason2020finding} if all distance are multiplied by $-1$. Hence, Theorem
3.1 of \cite{mason2020finding} implies the result. 
\end{proof}

Next we give a bound on the complexity of \texttt{find-smallest-in-set} which is equivalent to the bandit algorithm \texttt{Successive-Elimination} given the negatives of the distances. 

\begin{theorem}[\cite{even2002pac}, Theorem 2]\label{thm:successive_elim_complex}
Fix $\delta > 0$, a query point $\bq$, and a set $Q$. Let $\bx^\ast \in Q = \arg\min_{\bx \in Q}d(\bq, \bx)$. \texttt{find-smallest-in-set} returns $\bx^\ast$ with probability at least $1-\delta$ in at most
$$ c_2\log\left(\frac{n}{\delta}\right)\sum_{j\in Q_{i_\text{bottom}}\backslash\{\bx^\ast\}}(d(\bq, \bx_q) - d(\bq, \bx_j))^{-2}$$
calls to the distance oracle. 
\end{theorem}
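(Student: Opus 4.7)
The routine \texttt{find-smallest-in-set} is operationally an instance of Successive Elimination for best-arm identification (with the sign of the rewards flipped, so that the arm of smallest mean is the ``best''). The plan is therefore to reproduce the standard Successive Elimination argument in the anytime-confidence setting, specialized to the confidence width $C_\delta(t)$ used throughout the paper.

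The first step is to define the good event
\[
\mathcal{E} := \bigcap_{j \in Q}\bigcap_{t=1}^{\infty}\bigl\{\,|\hd_{q,j}(t) - d_{q,j}| \leq C_{\delta/n}(t)\,\bigr\}.
\]
By the anytime property of $C_{\delta}(\cdot)$ and a union bound over $|Q|\leq n$ arms, $\P(\mathcal{E})\geq 1-\delta$. All remaining arguments are carried out on $\mathcal{E}$.

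The second step is correctness: show that $\bx^\ast$ is never added to the elimination set $K$. On $\mathcal{E}$, its lower confidence bound satisfies $\hd_{q,\bx^\ast}(T_{\bx^\ast}) - C_{\delta/n}(T_{\bx^\ast}) \leq d_{q,\bx^\ast}$, while for any candidate $j$, $\hd_{q,j}(T_j) + C_{\delta/n}(T_j) \geq d_{q,j} > d_{q,\bx^\ast}$. Thus the defining inequality for membership of $\bx^\ast$ in $K$ cannot hold, and $\bx^\ast$ survives; once $|Q\setminus K|=1$, the unique survivor must be $\bx^\ast$.

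The third step is the sample-complexity bound. Fix a suboptimal arm $j$ and let $\Delta_j := d_{q,\bx_j} - d_{q,\bx^\ast}>0$. On $\mathcal{E}$, we have $\hd_{q,j}(T_j) - C_{\delta/n}(T_j)\geq d_{q,j}-2C_{\delta/n}(T_j)$ and $\hd_{q,\bx^\ast}(T_{\bx^\ast}) + C_{\delta/n}(T_{\bx^\ast})\leq d_{q,\bx^\ast}+2C_{\delta/n}(T_{\bx^\ast})$, so $j$ is certified for $K$ as soon as $4C_{\delta/n}(T_j) < \Delta_j$. Inverting the closed form of $C_{\delta/n}(\cdot)$ (the same inversion lemma used for the thresholding bandits earlier in the excerpt) shows that this inequality is satisfied once $T_j = O\!\bigl(\log(n/\delta)\,\Delta_j^{-2}\bigr)$, absorbing doubly-logarithmic factors. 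Because the algorithm samples each surviving arm once per round, the total number of oracle calls is at most $\sum_{j \neq \bx^\ast} T_j$, giving the stated $c_2 \log(n/\delta)\sum_{j}(d_{q,\bx_q}-d_{q,\bx_j})^{-2}$ bound.

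The only non-routine step is the inversion of the anytime bound $C_{\delta/n}(t)=\sqrt{4\log(\log_2(2t)/(\delta/n))/t}$, which produces the $\log\log$ overhead hidden in $c_2$; this is standard and has already been invoked in the paper (cf.\ Lemma~\ref{lem:inv_conf_bounds}) so no new calculation is required. Every other ingredient is a direct translation of the Even-Dar--Mannor--Mansour analysis to our stochastic distance oracle, after negating distances to view the problem as identifying the maximum-mean arm.
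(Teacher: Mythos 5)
The paper does not actually prove Theorem~\ref{thm:successive_elim_complex}; it is stated in the ``External results used in proofs'' appendix and cited directly from Even-Dar, Mannor, and Mansour (Theorem~2 of \cite{even2002pac}), on the observation that \texttt{find-smallest-in-set} is Successive Elimination applied to negated distances. Your proof therefore has no in-paper counterpart to compare against; what you have done is re-derive the cited result, and the derivation is correct.

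A couple of remarks on precision, since the paper leaves them implicit. First, the elimination rule in Algorithm~\ref{alg:find_smallest} compares $j$'s lower confidence bound against the \emph{UCB of the arm $j^\ast$ that minimizes the UCB}, not against the UCB of an arbitrary candidate; your correctness step is phrased as ``for any candidate $j$'' but what matters is that $j^\ast$'s UCB is at least $d_{q,\bx^\ast}$ on $\mathcal{E}$, which is indeed a consequence of the inequality you wrote, so the argument still closes. Second, the step ``$j$ is certified for $K$ as soon as $4C_{\delta/n}(T_j)<\Delta_j$'' silently uses the fact that all surviving arms carry the same pull count (Successive Elimination samples every survivor once per round), so $T_{j^\ast}=T_j$; this is why a single count $T_j$ controls both confidence widths and the factor is $4$ rather than something involving two different widths. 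With those two points made explicit, your write-up is a faithful and complete proof of the cited bound via Lemma~\ref{lem:inv_conf_bounds}. One last nit on the \emph{statement} rather than the proof: the sum in the theorem as printed ranges over $Q_{i_\text{bottom}}\backslash\{\bx^\ast\}$ and refers to $d(\bq,\bx_q)$, leftover notation from where it is invoked in the proof of Theorem~\ref{thm:nns_query_time_bnd}; for the standalone statement it should read $Q\backslash\{\bx^\ast\}$ and $d(\bq,\bx^\ast)$, which is what your proof in fact bounds.
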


\begin{lemma}[\cite{mason2020finding}, Lemma F.2]\label{lem:inv_conf_bounds}
For $\delta < 2e^{-e/2}$, $\Delta \leq 2$, 
$$t \geq \frac{4}{\Delta^2}\log\left(\frac{2}{\delta}\log_2\left(\frac{12}{\delta\Delta^2}\right)\right) \implies C_{\delta}(t) = \sqrt{\frac{4\log(\log_2(2t)/\delta)}{t}} \leq \Delta.$$
\end{lemma}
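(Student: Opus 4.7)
The plan is a standard algebraic inversion of the anytime confidence radius $C_\delta(t)=\sqrt{4\log(\log_2(2t)/\delta)/t}$. Squaring both sides of $C_\delta(t)\leq\Delta$ and rearranging reduces the lemma to the transcendental inequality
$$t \;\geq\; \frac{4}{\Delta^2}\log\!\left(\frac{\log_2(2t)}{\delta}\right), \qquad(\ast)$$
so that writing $t_0:=\frac{4}{\Delta^2}\log\!\left(\frac{2}{\delta}\log_2\!\left(\frac{12}{\delta\Delta^2}\right)\right)$, it suffices to show $t\geq t_0$ implies $(\ast)$.

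Step one is a pointwise check at $t=t_0$. Substituting the definition of $t_0$ into $(\ast)$ and exponentiating, the required inequality becomes $\log_2(2t_0)\leq 2\log_2(12/(\delta\Delta^2))$, equivalently the crude polynomial bound $t_0\leq 72/(\delta^2\Delta^4)$. I would verify this bound by applying in succession the two elementary inequalities $\log(x)\leq x$ (valid for $x>0$) and $\log_2(y)\leq 3y/4$ (valid for $y>0$, verified by checking that $3y/4-\log_2(y)$ attains its minimum at $y=4/(3\ln 2)$ with strictly positive value) to the nested-log form of $t_0$, arriving at exactly $\frac{4}{\Delta^2}\cdot\frac{2}{\delta}\cdot\frac{3}{4}\cdot\frac{12}{\delta\Delta^2}=72/(\delta^2\Delta^4)$.

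Step two promotes this pointwise check to all $t\geq t_0$ by a monotonicity argument. Let $\phi(t):=\frac{\Delta^2 t}{4}-\log(\log_2(2t)/\delta)$; direct differentiation yields $\phi'(t)=\frac{\Delta^2}{4}-\frac{1}{t\log_2(2t)\ln 2}$, which is nonnegative whenever $t\log_2(2t)\geq 4/(\Delta^2\ln 2)$. The hypothesis $\delta<2e^{-e/2}$ guarantees $\log_2(2t_0)/\delta\geq e$, so $t_0\geq 4/\Delta^2$; combining with $\Delta\leq 2$ shows $t_0\log_2(2t_0)$ already clears the threshold $4/(\Delta^2\ln 2)$. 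Since $t\log_2(2t)$ is increasing in $t$, $\phi'\geq 0$ on $[t_0,\infty)$, so $\phi(t_0)\geq 0$ from step one yields $\phi(t)\geq 0$, i.e.\ $(\ast)$, for all $t\geq t_0$.

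The main obstacle is lining up the constants: the elementary bound $\log_2(y)\leq 3y/4$ is exactly what delivers the coefficient $72$ in the squared reduction, and a cruder estimate such as $\log_2 y\leq y$ would overshoot and break the match. The hypothesis $\delta<2e^{-e/2}$ plays its role only at the boundary of the parameter range, ensuring the doubly iterated log inside $t_0$ is at least $1$ so that $\phi$ starts in its increasing regime.
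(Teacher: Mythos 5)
The paper does not prove this lemma itself (it cites Lemma~F.2 of Mason et al.\ verbatim as an external result), so the comparison is with your argument on its own merits.

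Your framework — square and rearrange to the implicit inequality $(\ast)$, verify it at $t=t_0$, then show the left side grows faster than the right for $t\ge t_0$ — is the natural one, and your Step~one is correct and pleasingly tight. Unwinding the definition of $t_0$, $(\ast)$ at $t_0$ does reduce to $\log_2(2t_0)\le 2\log_2\bigl(12/(\delta\Delta^2)\bigr)$, i.e.\ $t_0\le 72/(\delta^2\Delta^4)$, and the chain $\log x\le x$ followed by $\log_2 y\le 3y/4$ (minimum of $3y/4-\log_2 y$ is $\approx 0.5>0$ at $y=4/(3\ln 2)$) gives exactly $\frac{4}{\Delta^2}\cdot\frac{2}{\delta}\cdot\frac{3}{4}\cdot\frac{12}{\delta\Delta^2}=72/(\delta^2\Delta^4)$. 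That part is fine.

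Step~two has a genuine gap in the threshold check. You correctly compute $\phi'(t)=\frac{\Delta^2}{4}-\frac{1}{t\log_2(2t)\ln 2}$ and reduce to showing $t_0\log_2(2t_0)\ge 4/(\Delta^2\ln 2)$. But the inference you give — ``$t_0\ge 4/\Delta^2$ and $\Delta\le 2$ imply $t_0\log_2(2t_0)$ clears the threshold'' — is false at the boundary. At $\Delta=2$, the bound $t_0\ge 4/\Delta^2$ only gives $t_0\ge 1$, hence $\log_2(2t_0)\ge 1$ and $t_0\log_2(2t_0)\ge 1$, whereas the threshold is $4/(\Delta^2\ln 2)=1/\ln 2\approx 1.44$. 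So the deduction falls short by a factor of $1/\ln 2$. In addition, the assertion that $\delta<2e^{-e/2}$ ``guarantees $\log_2(2t_0)/\delta\ge e$'' is stated without justification; it happens to be true, but it is a claim about a quantity that itself depends on $t_0$ and needs to be derived from the definition of $t_0$, not just read off the hypothesis on $\delta$. The fix is straightforward: instead of settling for $\log(\cdot)\ge 1$, lower-bound the inner quantity directly. From $\delta<2e^{-e/2}$ and $\Delta\le 2$ one has $\frac{2}{\delta}>e^{e/2}$ and $\log_2\!\bigl(\frac{12}{\delta\Delta^2}\bigr)>\log_2\!\bigl(\frac{3}{2}e^{e/2}\bigr)>2.5$, so the argument of the outer $\log$ exceeds $e^{e/2}\cdot 2.5>9.8$, giving $t_0\ge\frac{4}{\Delta^2}\ln(9.8)>\frac{9}{\Delta^2}$; then $\log_2(2t_0)\ge\log_2(18/\Delta^2)\ge\log_2(4.5)>2$, so $t_0\log_2(2t_0)>18/\Delta^2\gg 4/(\Delta^2\ln 2)$. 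With that numerical tightening in place of the ``$t_0\ge 4/\Delta^2$'' shortcut, the monotonicity argument goes through and the proof is complete.
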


\subsection{Results about cover trees}

First we state a result from \cite{krauthgamer2004navigating}. Though this result pertains to the \texttt{Navigating-Nets} data structure, as the cover tree is based on the same set of invarainces, the same result applies to cover trees. 

\begin{lemma}[\cite{krauthgamer2004navigating}, Lemma 2.3]\label{lem:krauthgamer_height} The $i_\text{top} = O(\log_2(d_{\max})$ and $i_\text{bottom} = O(\log_2(d_{\min})$ where $d_{\max} := \max_{i,j \in \X}d_{i,j}$ and $d_{\min} := \min_{i,j \in \X}d_{i,j}$. 
\end{lemma}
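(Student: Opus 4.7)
The plan is to bound $i_{\text{top}}$ and $i_{\text{bottom}}$ separately by appealing to the three cover tree invariants (nesting, covering tree, and separation) stated earlier. The high-level idea is that the tree has nothing useful to store above the level where $2^i$ exceeds $d_{\max}$, nor below the level where $2^i$ falls beneath $d_{\min}$, so the natural choice of top and bottom levels is controlled by these two extreme pairwise distances.

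First I would handle $i_{\text{top}}$. Pick any $c \in \X$ to serve as the root. For every $x \in \X$, $d(x, c) \le d_{\max}$, so whenever $2^i \ge d_{\max}$ the cover $C_i = \{c\}$ satisfies all three invariants: the covering tree invariant holds because every $x \in C_{i-1}$ can take $c$ as parent at distance at most $d_{\max} \le 2^i$; the separation invariant is vacuous on a singleton; and nesting is preserved by extending $\{c\}$ upward indefinitely. Hence the smallest level at which a single-point cover suffices is at most $\lceil \log_2(d_{\max}) \rceil$, and the algorithm can terminate there, giving $i_{\text{top}} = O(\log_2(d_{\max}))$.

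For $i_{\text{bottom}}$, I would invoke the separation invariant: any two distinct points in $C_i$ must be at distance greater than $2^i$. Once $2^i < d_{\min}$ this condition is automatically satisfied by every pair of points in $\X$, so all $n$ points can coexist in $C_i$. By nesting, every point explicitly present at level $i$ is implicitly present at every strictly lower level, so no further explicit structure is required below. The algorithm can therefore terminate at $i_{\text{bottom}} = \lfloor \log_2(d_{\min}) \rfloor$, yielding $i_{\text{bottom}} = O(\log_2(d_{\min}))$.

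The main subtlety is not the counting itself but ensuring that the noisy cover tree constructed by \texttt{Noisy-Insert} actually respects these endpoints despite only having access to a stochastic distance oracle. Concretely, I would need to check that with high probability the algorithm does not spuriously insert levels above $\lceil \log_2(d_{\max}) \rceil$ or below $\lfloor \log_2(d_{\min}) \rfloor$ due to confidence widths straddling these thresholds. This reduces to verifying that the thresholding comparisons against $2^i$ in \texttt{Noisy-Insert} terminate correctly at the extremal levels, which follows on the good event $\mathcal{E}$ used in the proof of Lemma~\ref{lem:nns_insert_acc}, combined with the uniform bounds $d_{\min} \le d(x, y) \le d_{\max}$ on all pairwise distances.
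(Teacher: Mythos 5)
The paper does not actually prove this lemma; it is invoked as an external result (Lemma 2.3 of \cite{krauthgamer2004navigating} for navigating nets), with the only accompanying remark being that cover trees satisfy the same three invariants, so the result carries over. Your write-up instead gives a direct proof from the invariants, and the argument is essentially sound: the covering-tree invariant forces the root level to satisfy $2^{i_\text{top}} \gtrsim d_{\max}$ (and choosing $i_\text{top} = \lceil \log_2 d_{\max}\rceil$ suffices), while the separation invariant forces the two closest points to become explicitly separated only once $2^i < d_{\min}$, so $i_\text{bottom} \approx \lfloor \log_2 d_{\min}\rfloor$. This is the same idea Krauthgamer and Lee use, just spelled out rather than cited; what you gain is self-containment, at the cost of having to be a little careful that the insertion procedure really places each point at the \emph{highest} admissible level (otherwise $i_\text{bottom}$ could in principle drift lower), a point you assert but do not argue.

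Your third paragraph, about the noisy construction respecting these endpoints, is beside the point for this particular lemma. The lemma is a statement about any data structure satisfying the nesting, covering-tree, and separation invariants; it is purely combinatorial and has nothing to do with the stochastic oracle. The concern you raise --- that \texttt{Noisy-Insert} might misplace levels because of noisy thresholding --- is real, but it is already discharged by Lemma~\ref{lem:nns_insert_acc}, which shows that on the good event $\mathcal{E}$ the constructed tree is a bona fide cover tree. Once that is established, this height lemma applies as a deterministic consequence; no additional high-probability argument is needed here.
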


The remained of the statements are from \cite{beygelzimer2006cover} for the NNS problem. The core idea of many of the above results is that by managing the uncertainty in the NNSU problem, we can use results from NNS. 
\begin{lemma}[\cite{beygelzimer2006cover}, proof of Theorem 5]\label{lem:size_of_largest_set}
For expansion constant $c$, the set $\Tilde{Q}_{i} := \{j\in Q: d_{\bq, j} \leq d(\bq, Q) + 2^{i}\}$ as defined in the proof of \texttt{Noisy-Find-Nearest} obeys
$\max_i |\Tilde{Q}_i| \leq c^5$. 
\end{lemma}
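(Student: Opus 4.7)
The plan is to adapt the packing-plus-doubling argument used by \cite{beygelzimer2006cover} in their Theorem~5. The three pieces are: the separation invariant of the cover tree, the covering invariant applied to the ancestor chain of $\bq$'s true nearest neighbor $\bx_q$, and iterated use of the expansion constant.

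First, I would place $\Tilde{Q}_i$ inside a small ball around $\bq$. Because $\Tilde{Q}_i\subseteq Q\subseteq C_{i-1}$, the separation invariant forces the points of $\Tilde{Q}_i$ to be pairwise more than $2^{i-1}$ apart. For the radius, let $\bx_q^{(j)}$ denote the ancestor of $\bx_q$ at level $j$; the nesting and covering invariants guarantee it exists and give $d(\bx_q,\bx_q^{(i-1)}) \leq \sum_{j\leq i-1}2^j = 2^i$. Since \texttt{Noisy-Find-Nearest} tracks every ancestor of $\bx_q$ at each level, $\bx_q^{(i-1)}\in Q$, so $d(\bq,Q) \leq d(\bq,\bx_q)+2^i$ and
\[
\Tilde{Q}_i \;\subseteq\; B\bigl(\bq,\,d(\bq,\bx_q)+2^{i+1}\bigr).
\]

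Next, I would convert the separation into an ambient cardinality estimate. The balls $\{B(p,2^{i-2})\}_{p\in\Tilde{Q}_i}$ are pairwise disjoint by separation, each contains its own center $p\in\X$, and collectively lie inside $B\bigl(\bq,\,d(\bq,\bx_q)+2^{i+2}\bigr)$, so
\[
|\Tilde{Q}_i| \;\leq\; \bigl|(\X\cup\{\bq\})\cap B\bigl(\bq,\,d(\bq,\bx_q)+2^{i+2}\bigr)\bigr|.
\]
Then I would apply the expansion constant repeatedly, case-splitting on whether $d(\bq,\bx_q) \geq 2^{i+2}$. In the large regime the radius is at most $2\,d(\bq,\bx_q)$, and a short chain of doublings starting from $B(\bq,d(\bq,\bx_q)/2)$ (which contains only $\bq$ since $\bx_q$ is the true nearest neighbor in $\X$) gives the bound in a few steps. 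In the small regime the radius is at most $2^{i+3}$; I would recenter on $\bx_q^{(i-1)}$, exploit the covering invariant to show $\bx_q$ provides a neighbor at distance at most $2^i$, and apply at most five doublings of the expansion bound to conclude. The uniform constant $c^5$ is the worse of the two cases.

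The main obstacle is the small-distance regime: a naive doubling centered at $\bq$ becomes loose whenever $d(\bq,\bx_q)\ll 2^i$, because the only ball guaranteed to contain just $\bq$ has radius below $d(\bq,\bx_q)/2$, which is tiny relative to $2^i$ and would require unboundedly many doublings. Recentering on the level-$(i-1)$ ancestor $\bx_q^{(i-1)}$ repairs this: the covering invariant supplies a concrete anchor neighbor of $\bx_q^{(i-1)}$ within $O(2^i)$, which caps the required number of doublings by an absolute constant and yields precisely the $c^5$ in the claim.
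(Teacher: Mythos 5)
Your overall outline (ancestor chain, disjoint-ball packing, iterated expansion, case split on $d(\bq,\bx_q)$ versus $2^i$) is the right shape, but the small-distance regime (your Case b) has a genuine gap. The expansion inequality $|\S\cap B(x,2r)|\le c\,|\S\cap B(x,r)|$ only transfers an \emph{upper} bound on the small ball to the large ball; iterating it five times from $B(\bx_q^{(i-1)},\sim 2^{i-2})$ up to $B(\bx_q^{(i-1)},\sim 2^{i+3})$ gives $|\X\cap B(\bx_q^{(i-1)},2^{i+3})|\le c^5\,|\X\cap B(\bx_q^{(i-1)},2^{i-2})|$, and the right-hand factor is \emph{not} a constant. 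The covering invariant only tells you the anchor ball contains $\ge 1$ point of $\X$ (a lower bound, the wrong direction), and separation of $C_{i-1}$ only excludes other level-$(i-1)$ nodes, not arbitrary points of $\X$ that could crowd near $\bx_q^{(i-1)}$. Notice also that your ``disjoint balls'' step is introduced but never actually exploited: the conclusion $|\Tilde{Q}_i|\le|(\X\cup\{\bq\})\cap B(\bq,\cdot)|$ is trivially true because $\Tilde{Q}_i\subset\X$, with or without the packing.

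The missing ingredient is a cancellation: use the disjoint packing balls to divide out the unknown small-ball count rather than try to bound it. Set $n_j:=|(\X\cup\{\bq\})\cap B(j,2^{i-2})|$ for $j\in\Tilde{Q}_i$ and pick $j_0\in\Tilde{Q}_i$ achieving $\min_j n_j$. The balls $B(j,2^{i-2})$ are pairwise disjoint (separation $>2^{i-1}$) and all lie inside $B(j_0,\,2(d(\bq,Q)+2^i)+2^{i-2})$. Hence
\[
|\Tilde{Q}_i|\cdot n_{j_0}\ \le\ \sum_{j\in\Tilde{Q}_i} n_j\ \le\ \bigl|(\X\cup\{\bq\})\cap B\bigl(j_0,\,2(d(\bq,Q)+2^i)+2^{i-2}\bigr)\bigr|.
\]
In the regime $d(\bq,\bx_q)\le 2^i$ one has $d(\bq,Q)\le 2^{i+1}$, so the outer radius is at most $25\cdot 2^{i-2}\le 2^{i+3}$; five applications of the expansion inequality from radius $2^{i-2}$ to $2^{i+3}$ bound the right-hand side by $c^5\,n_{j_0}$, and dividing out $n_{j_0}$ gives $|\Tilde{Q}_i|\le c^5$. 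In the complementary regime $d(\bq,\bx_q)>2^i$, your Case a argument (center at $\bq$, the ball $B(\bq,d(\bq,\bx_q)/2)$ contains only $\bq$) is sound and in fact gives the sharper $c^3$, so $c^5$ is indeed the worse case.
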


\begin{lemma}[\cite{beygelzimer2006cover}, Lemma 4.1]\label{lem:num_children_bound}
The number of children of any node of a cover tree is bounded by $c^4$. 
\end{lemma}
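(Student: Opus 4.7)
The plan is a packing argument that combines the covering-tree and separation invariants with the doubling property encoded in the expansion constant $c$. Let $p$ be an arbitrary node at level $i$ and enumerate its distinct children at level $i-1$ as $q_1,\dots,q_k$. I would first unpack what the invariants tell us: the covering tree invariant yields $d(p,q_j) \leq 2^i$ for every $j$, so all children lie in $B(p, 2^i)$; and since the $q_j$'s are distinct members of $C_{i-1}$, the separation invariant at level $i-1$ gives $d(q_a,q_b) > 2^{i-1}$ for every $a \neq b$.

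Next I would carry out the packing step. Because the children are pairwise more than $2^{i-1}$ apart, the balls $B(q_j, 2^{i-2})$ are pairwise disjoint. By the triangle inequality each such ball is contained in $B(p, 2^i + 2^{i-2}) \subseteq B(p, 2^{i+1})$. Fixing any particular child $q^\star$, the triangle inequality also yields $B(p, 2^i) \subseteq B(q^\star, 2^{i+1})$, so all $k$ children lie inside $B(q^\star, 2^{i+1})$; in particular
\[
k \;\leq\; |\X \cap B(q^\star, 2^{i+1})|.
\]

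Finally I would iterate the defining inequality of the expansion constant, $|\X \cap B(x,2r)| \leq c\,|\X \cap B(x,r)|$, four times, doubling the radius from $2^{i-3}$ up to $2^{i+1}$:
\[
|\X \cap B(q^\star, 2^{i+1})| \;\leq\; c\,|\X \cap B(q^\star, 2^i)| \;\leq\; c^2\,|\X \cap B(q^\star, 2^{i-1})| \;\leq\; c^3\,|\X \cap B(q^\star, 2^{i-2})| \;\leq\; c^4\,|\X \cap B(q^\star, 2^{i-3})|.
\]
Combined with the disjoint-balls observation (each of the $k$ disjoint $B(q_j, 2^{i-2})$ must sit inside the single ball $B(p,2^{i+1})$ and each contributes at least one point of $\X$), this gives the desired bound $k \leq c^4$.

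The main obstacle is the combinatorial bookkeeping of radii: choosing the inner radius ($2^{i-2}$) small enough that the children's balls are disjoint while keeping the outer radius ($2^{i+1}$) small enough that exactly four doublings of the expansion constant suffice. Once those radii are fixed, the rest is a direct invocation of the invariants and Definition~\ref{def:expansion_constant}; no probabilistic or bandit machinery is needed, since this is a purely geometric property of any cover tree on $\X$.
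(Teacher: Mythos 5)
The paper does not prove this statement; it is cited verbatim as Lemma~4.1 of \cite{beygelzimer2006cover} in the ``External results'' appendix, so there is no proof in the paper to compare against. Judged on its own terms, your sketch assembles the right ingredients (the covering-tree and separation invariants, disjoint inner balls of radius $2^{i-2}$, the triangle inequality recentering at a child $q^\star$, and four doublings of the expansion constant) but the final step does not close. Your chain establishes $k \leq |\X \cap B(q^\star, 2^{i+1})| \leq c^4\,|\X \cap B(q^\star, 2^{i-3})|$, and nothing in the argument forces $|\X \cap B(q^\star, 2^{i-3})| = 1$: the separation invariant only keeps other \emph{children} out of that ball, not arbitrary points of $\X$ or nodes that first appear at deeper levels. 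Appending ``each disjoint ball contributes at least one point'' gives the unrelated bound $k \leq |\X \cap B(p, 2^{i+1})|$; it does not cancel the dangling factor $|\X \cap B(q^\star,2^{i-3})|$.

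The missing idea is a cancellation via the minimizing child. Let $m_j = |\X \cap B(q_j, 2^{i-2})|$ and pick $j^\star = \arg\min_j m_j$. Disjointness gives $k\,m_{j^\star} \leq \sum_j m_j \leq |\X \cap B(p, 2^i + 2^{i-2})|$. Recenter at $q_{j^\star}$: since $d(p, q_{j^\star}) \leq 2^i$, we have $B(p, 2^i + 2^{i-2}) \subseteq B(q_{j^\star}, 2^{i+1} + 2^{i-2}) \subseteq B(q_{j^\star}, 2^{i+2})$, and four applications of Definition~\ref{def:expansion_constant} at $q_{j^\star} \in \X$ give $|\X \cap B(q_{j^\star}, 2^{i+2})| \leq c^4\, m_{j^\star}$. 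Dividing by $m_{j^\star} \geq 1$ yields $k \leq c^4$. Note the correct radius pair is $2^{i-2}$ to $2^{i+2}$ (four doublings with the factor that cancels), not $2^{i-3}$ to $2^{i+1}$; you also must recenter the outer ball at the \emph{same} child $q_{j^\star}$ whose inner ball count you are dividing by, rather than at an arbitrary $q^\star$.
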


\begin{lemma}[\cite{beygelzimer2006cover}]\label{lem:height_bound}
The maximum explicit depth of any node in a cover tree is $\frac{\log(n)}{\log(1 + 1/c^2)} = O(c^2\log(n)$. Hence $i_\text{top} - i_\text{bottom} = O(c^2\log(n))$. 
\end{lemma}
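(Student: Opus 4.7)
The plan is to bound the maximum explicit depth of any node by analyzing the growth of the explicit subtree using the expansion constant, and then to derive the bound on $i_\text{top} - i_\text{bottom}$ from this. The central strategy is to show that each additional explicit level of depth in a subtree forces the subtree's size to grow by at least a multiplicative factor of $1 + 1/c^2$; since the total number of explicit nodes is at most $n$, the depth is bounded by $\log(n)/\log(1 + 1/c^2)$.

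First I would fix an arbitrary explicit node $p$ at some level $i$ and let $T(p)$ denote the set of explicit descendants of $p$ (including $p$ itself). Writing $d$ for the length of the longest path of explicit parent-child edges from $p$ down in the tree, the inductive claim I want is $|T(p)| \geq (1 + 1/c^2)^d$. For the inductive step, suppose $q$ is an explicit child of $p$ whose subtree has explicit depth $d - 1$. By the covering tree invariant, all descendants of $q$ lie within $\sum_{j \leq i-1} 2^j \leq 2^i$ of $q$, so the subtree of $q$ is confined to $B(q, 2^i)$. I would then use the expansion rate, which gives $|B(q, 2^{i+1}) \cap \X| \leq c \cdot |B(q, 2^i) \cap \X|$ applied twice to relate a smaller inner ball to the outer one, combined with the separation invariant at level $i$ (which guarantees that other level-$i$ explicit nodes exist outside a ball of radius $2^i$ around $q$), to manufacture a $1/c^2$ fraction of additional descendants of $p$ that are not in $T(q)$.

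The main obstacle is the careful accounting for this multiplicative gap: the separation invariant on its own only guarantees the existence of at least one additional sibling, and converting that into a $1/c^2$ fraction of $|T(q)|$ requires applying the expansion rate along a carefully chosen sequence of doubling radii and showing that the subtrees rooted at different explicit siblings of $q$ lie in disjoint balls. I would handle this by relating the annular region between $B(q, 2^i)$ and a larger ball centered at $p$ to $T(p) \setminus T(q)$ via the parent-child distance bounds.

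Given the claim $|T(p)| \geq (1 + 1/c^2)^d$ together with the trivial bound $|T(p)| \leq n$, I would conclude $d \leq \log(n)/\log(1 + 1/c^2)$, and the asymptotic $O(c^2 \log n)$ form follows from $\log(1 + 1/c^2) \geq 1/(2c^2)$ for $c \geq 2$. For the second part, I would observe that every level strictly between $i_\text{top}$ and $i_\text{bottom}$ that is traversed in the explicit representation must contribute a new explicit node (otherwise it can be collapsed without losing information about the tree), so $i_\text{top} - i_\text{bottom}$ is bounded above by the maximum explicit depth plus a constant, yielding the same $O(c^2 \log n)$ bound.
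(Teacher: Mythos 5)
The paper does not prove this lemma; it is stated as an external result cited directly from Beygelzimer, Kakade, and Langford (2006) in the ``Results about cover trees'' subsection of the appendix, so there is no in-paper proof to compare your argument against. Evaluating your sketch on its own merits: the overall shape is right (a $(1+1/c^2)$ multiplicative recursion that, together with $\log(1+1/c^2) \geq 1/(2c^2)$ for $c \geq 2$, yields $O(c^2\log n)$), and your reduction of the second claim to the first by collapsing levels with no new explicit node is also in the right spirit. However, the inductive step $|T(p)| \geq (1 + 1/c^2)|T(q)|$ is precisely the crux, and I do not think your resolution of the ``main obstacle'' closes it. The expansion constant gives an \emph{upper} bound $|B(x,2r)\cap\X| \leq c\,|B(x,r)\cap\X|$ on ball growth; you need a \emph{lower} bound on how much the explicit subtree must grow when you ascend one explicit level, and these are not the same direction. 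Concretely, the separation invariant at level $i$ guarantees only that the parent $p$ of the explicit node $q$ satisfies $d(p,q) > 2^i$, which contributes a single extra point to $T(p)\setminus T(q)$; there is nothing in your argument that manufactures $|T(q)|/c^2$ additional explicit descendants of $p$, and the ``annular region'' step is left as a hope rather than an argument. Moreover, the set of explicit descendants $T(p)$ need not contain all of $\X$ near $p$ (nearby points may descend from siblings of $p$), so even if you could lower-bound a ball's population you could not directly transfer that to $|T(p)|$.

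The argument in Beygelzimer et al.\ avoids this by tracking balls centered at a fixed deep descendant $p$ and showing that, as one moves up the chain of explicit ancestors $q_d, q_{d-1}, \ldots, q_1$ of $p$, the nested populations $|B(p, 2^{i_j+1}) \cap \X|$ must grow by a $(1+1/c^2)$ factor roughly every constant number of explicit levels, using both the covering bound $d(p,q_j) \leq 2^{i_{j+1}+2}$ and the separation bound $d(q_j,q_{j+1}) > 2^{i_{j+1}}$ together with the expansion inequality around $q_{j+1}$. That formulation lets the expansion constant be used in the direction it actually points. If you want to pursue your subtree-size formulation, you would at minimum need to establish that $T(p)$ captures a $1/c^2$ fraction of $B(q, 2^{i+1}) \cap \X$ beyond $T(q)$, and I do not see how the cover-tree invariants alone give that; I would recommend switching to the ball-around-a-leaf bookkeeping instead.
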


\end{document}